\theoremstyle{plain}
\newtheorem{theorem}{Theorem}[section]
\newtheorem{proposition}[theorem]{Proposition}
\newtheorem{corollary}[theorem]{Corollary}
\theoremstyle{definition}
\newtheorem{definition}[theorem]{Definition}
\newtheorem{assumption}[theorem]{Assumption}
\theoremstyle{remark}
\newtheorem{remark}[theorem]{Remark}
\newcommand*{\eg}{\emph{e.g.}{}}
\newcommand*{\ie}{\emph{i.e.}{}}
\DeclareMathOperator*{\argmax}{arg~max}
\begin{document}

\twocolumn[
  \icmltitle{Calibrating Decision Robustness via Inverse Conformal Risk Control}
  \begin{icmlauthorlist}
    \icmlauthor{Wenbin Zhou}{1,2}
    \icmlauthor{Shixiang Zhu}{1}
  \end{icmlauthorlist}
  \icmlaffiliation{1}{Heinz College of Information Systems and Public Policy, Carnegie Mellon University, Pittsburgh, PA, USA}
  \icmlaffiliation{2}{Machine Learning Department, School of Computer Science, Carnegie Mellon University, Pittsburgh, PA, USA}
  \icmlcorrespondingauthor{Shixiang Zhu}{shixianz@andrew.cmu.edu}
  \icmlkeywords{Conformal prediction, robust optimization, multiobjective optimization}
  \vskip 0.3in
]
\printAffiliationsAndNotice{}  

\begin{abstract}
    Robust optimization safeguards decisions against uncertainty by optimizing against worst-case scenarios, yet their effectiveness hinges on a prespecified robustness level that is often chosen ad hoc, leading to either insufficient protection or overly conservative and costly solutions. Recent approaches using conformal prediction construct data-driven uncertainty sets with finite-sample coverage guarantees, but they still fix coverage targets a priori and offer little guidance for selecting robustness levels. We propose a new framework that provides distribution-free, finite-sample guarantees on both miscoverage and regret for any family of robust predict-then-optimize policies. Our method constructs valid estimators that trace out the miscoverage--regret Pareto frontier, enabling decision-makers to reliably evaluate and calibrate robustness levels according to their cost--risk preferences. The framework is simple to implement, broadly applicable across classical optimization formulations, and achieves sharper finite-sample performance. This paper offers a principled data-driven methodology for guiding robustness selection and empowers practitioners to balance robustness and conservativeness in high-stakes decision-making.
\end{abstract}

\section{Introduction}

\begin{figure}
    \centering
    \includegraphics[width=1.0\linewidth]{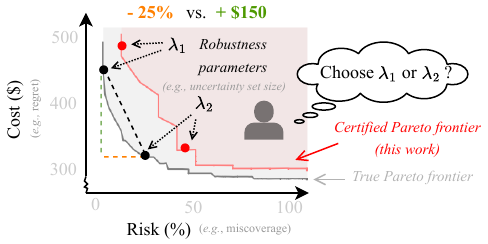}
    \vspace{-2ex}
    \caption{A stylized example illustrating the use of our proposed framework.
    Instead of heuristically fixing a risk level (\eg, $5$\%), the decision-maker can use this frontier to select a preferred trade-off. For instance, the curve indicates that reducing risk by up to $25$\% (orange dashed line) yields a cost reduction of at least \$$150$, corresponding to roughly a $30$\% improvement (green dashed line).
    }
    \label{fig:demo}
\end{figure}

Decision-making under uncertainty is a fundamental challenge in operations research, machine learning, and economics. From utilities preparing for extreme weather \cite{chen2025adaptive} to financial institutions managing investment risk \cite{fabozzi2007robust}, decision-makers must act without knowing the exact outcomes of their choices. Robust optimization (RO) has emerged as a leading paradigm for addressing this challenge: rather than optimizing for the most likely outcome, RO seeks decisions that remain effective under the worst-case realization within a prespecified uncertainty set \cite{beyer2007robust}. For instance, grid operators may schedule generator dispatch robust to worst-case demand fluctuations \cite{bertsimas2012adaptive}, while portfolio managers may hedge against worst-case returns consistent with historical data \cite{goldfarb2003robust}.

A standard way to operationalize the RO problems is through the predict-then-optimize (PTO) framework \cite{bertsimas2020predictive}: a predictive model first suggests an uncertainty set for the outcome, and the decision-maker then chooses a decision that minimizes the worst-case loss within that set. In traditional RO, this uncertainty set is pre-specified, often as a polyhedral or ellipsoidal region around forecasts \cite{gabrel2014recent}. Its size is governed by a robustness parameter, typically selected through heuristics. For instance, setting the radius to a multiple of the forecast variance.   
More recently, conformal prediction (CP) has emerged as a data-driven alternative for defining uncertainty sets, providing finite-sample coverage guarantees that the realized outcome lies within the set with a prescribed probability \cite{vovk2005algorithmic}. 
This approach has also been incorporated into robust optimization, where CP-based sets define the uncertainty region used for decision-making \cite{patel2024conformal}.

While these approaches have expanded the scope and rigor of robust decision-making, they leave a critical question unresolved: \emph{how should one choose the robustness level?} In RO, the radius of the uncertainty set---or in CP, the confidence level---directly governs the trade-off between protection and efficiency. In practice, this choice is typically made heuristically: by fixing conventional confidence levels (\eg, 95\%) or by tuning against historical simulations \cite{lam2017empirical}. Such methods provide little theoretical guidance, and when data are scarce, simulation-based estimates can be unreliable. This gap makes it difficult for practitioners to balance safety with performance, and decisions are often more conservative than necessary.

In this paper, we propose a new statistical framework that enables decision-makers to evaluate and select robustness levels with explicit trade-offs. Our method is inspired by conformal risk control (CRC) \cite{angelopoulos2022conformal}, but inverts its logic. Standard CRC begins by prescribing a target risk level and adaptively constructs a set or decision rule to satisfy that risk constraint. By contrast, our inverse CRC framework takes an uncertainty set as given, and then certifies the actual risk level it incurs. This inversion is crucial: it allows us to quantify both miscoverage (the probability that the outcome falls outside the set) and regret (the expected cost relative to an oracle) across all risk levels with rigorous guarantees. Practically, this enables decision-makers to explore the trade-off between robustness and conservativeness reliably.
For example, in the stylized setting of Figure~\ref{fig:demo}, our method may show that reducing coverage by up to $25$\% can yield a cost reduction of at least \$$150$. Such insights enable decision-makers to adjust robustness levels adaptively, achieving more cost-effective solutions rather than relying on ad hoc or overly conservative heuristics.

The approach is also straightforward to implement: given calibration data, we construct valid estimators for miscoverage and regret across all robustness levels and use them to trace out a Pareto frontier of risk trade-offs under a mild condition. 
If the robustness level is chosen post hoc by a human decision-maker based on the estimated trade-offs, we explicitly account for this selection by performing a final recalibration on a split dataset, which restores finite-sample validity for the reported miscoverage and regret guarantees.
Theoretical results show that these estimators are reliable, enjoy sharp finite-sample guarantees, and apply broadly to various robust optimization problems.
Our numerical results demonstrate that the estimator successfully traces Pareto frontiers that are both valid and tight, and can also effectively guide decision-makers toward robustness parameters that are near-optimal for achieving the desired miscoverage–regret tradeoff. 

The contributions of this paper are threefold:
\begin{enumerate}[leftmargin=*, itemsep=0pt, parsep=0pt, topsep=0pt]
    \item We propose a principled framework for ``inverse'' conformal risk control, enabling risk quantification with respect to any arbitrary loss function.
    \item We develop a distribution-free estimator of both miscoverage and regret across robustness levels in robust optimization. Our approach is computationally efficient, valid in finite samples, and establishes the first certified Pareto frontier for robust optimization under mild assumptions.
    \item We illustrate the broad applicability of our approach on classical optimization problems, demonstrating how it enables decision-makers to reliably calibrate robustness by balancing robustness against conservativeness.
\end{enumerate}

\vspace{-.1in}
\paragraph{Related Work}

Our work is related to the predict-then-optimize (PTO) literature, which studies decision-making under uncertainty via two-stage pipelines or decision-focused learning \cite{bertsimas2020predictive, mandi2024decision}. Within this paradigm, robust optimization (RO) and distributionally robust optimization (DRO) address uncertainty by optimizing against deterministic or distributional ambiguity sets \cite{ben2002robust, beyer2007robust, gabrel2014recent, rahimian2019distributionally, lin2022distributionally}.
Complementary to these frameworks, we adopt a diagnostic viewpoint: rather than prescribing a particular robust decision, our focus is on characterizing and certifying the risk trade-offs induced by different robustness levels, thereby supporting decision-making through principled auditing and calibration.
This viewpoint also connects our work to Pareto-front analysis in multi-objective optimization \cite{marler2004survey, gunantara2018review}, where trade-offs are typically explored via parameter enumeration. Building on this literature, we construct a \emph{certified} Pareto front with explicit finite-sample guarantees using conformal prediction—an element that, to our knowledge, has not yet been incorporated into Pareto-front analysis.

Conformal prediction (CP) provides a general framework for uncertainty quantification with finite-sample validity guarantees \cite{papadopoulos2002inductive, vovk2005algorithmic}. Recent work has focused on improving the efficiency of CP in multi-dimensional settings by adapting prediction sets to richer geometric structures, including ellipsoidal sets \cite{messoudi2022ellipsoidal, xu2024conformal}, convex template-based sets \cite{tumu2024multi}, and density-adaptive constructions \cite{izbicki2022cd, wang2023probabilistic, messoudi2021copula, sun2023copula}. 
These contributions have laid important groundwork for a growing body of research integrating CP with robust optimization (RO), where CP is used to construct data-driven uncertainty sets for decision-making \cite{johnstone2021conformal}.
Representative recent works include the use of probabilistic CP in predict-then-optimize pipelines \cite{patel2024conformal}, CP-based robust control for linear–quadratic regulator systems \cite{patel2025conformal}, decision-focused frameworks that leverage CP to guarantee robustness \cite{cortes2024utility}, and multi-layer robust optimization formulations for grid operations informed by CP \cite{chen2025adaptive}.
Our work is complementary to this line of research. Rather than using CP to construct the uncertainty set, we consider settings in which the uncertainty set family is specified \emph{a priori}, with CP used to certify risk trade-offs as the set size varies. 

Conformal risk control (CRC) generalizes conformal prediction to construct loss functions with prespecified miscoverage guarantee \cite{angelopoulos2022conformal}.
Subsequent work has substantially extended the CRC framework in multiple directions:
\cite{farinhas2023non} studies CRC under non-exchangeable data;
\cite{blot2025automatically} proposes an adaptive CRC procedure inspired by adaptive conformal prediction \cite{zaffran2022adaptive} to better exploit conditional information;
\cite{teneggi2023trust} extends CRC to multidimensional outputs with applications to image generation;
\cite{cohen2024cross} introduces a cross-validation-based approach to mitigate data scarcity;
\cite{yeh2025conformal} generalizes CRC to optimization problems with control over tail-sensitive risks; and
\cite{luo2025conditional} applies CRC to image segmentation tasks.
In contrast, our work addresses the inverse problem of CRC: \textit{given a fixed loss function, how can we estimate its miscoverage?}
This inverse perspective can be viewed as a generalization of inverse conformal prediction \cite{prinster2022jaws, singh2024distribution, zhou2025conformalized, gauthier2025backward} to CRC.

Finally, our work relates to post-hoc selection bias arising from data-dependent parameter selection \cite{cox1975note}. Recent works have proposed stability-based approaches \cite{zrnic2023post, hegazy2025valid} and e-value-based methods in CP settings \cite{balinsky2024enhancing, gauthier2025values, gauthier2025backward}. In contrast, our work follows a more typical data-splitting strategy \cite{cox1975note, wasserman2009high, fithian2014optimal, rasines2023splitting}.

\section{Problem Setup}

\begin{figure*}[!t]
    \centering
    \includegraphics[width=0.9\linewidth]{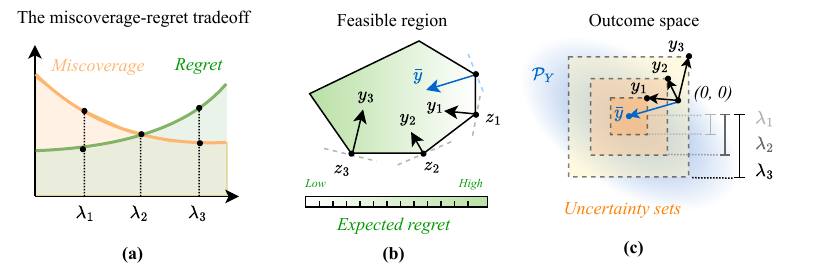}
    \caption{Illustration of the problem setup with a simple robust linear optimization problem. Panel (a) shows the miscoverage-regret tradeoff across three robustness parameters ($\lambda_1, \lambda_2, \lambda_3$). Panel (b) depicts the corresponding robust decisions ($z_1, z_2, z_3$), which differ from the expected optimal decision $z^*$. Panel (c) illustrates an $\ell_\infty$-ball uncertainty set $\mathcal{U}_\lambda$ for the outcome variable $Y$, with the three adversarial outcome vectors ($y_1, y_2, y_3$) labeled. The robustness parameters correspond to the radii of these $\ell_\infty$-norm sets.
    }
    \label{fig:setup}
\end{figure*}

We consider a general decision-making problem under uncertainty \cite{bertsimas2020predictive}. A decision-maker observes a contextual feature vector $X \in \mathcal{X}$ and must choose a decision variable $z \in \mathcal{Z}$ before the outcome $Y \in \mathcal{Y}$ is realized. 
The pair $(X, Y)$ is drawn from an underlying distribution $\mathcal{P}$.
The quality of a decision $z$ in the presence of outcome $y$ is measured by a cost function $f: \mathcal{Y} \times \mathcal{Z} \to \mathbb{R}$.

To account for uncertainty from the unknown joint distribution $\mathcal{P}$, a standard approach is to introduce an \emph{uncertainty set} that safeguards against plausible realizations of $Y$. Specifically, we introduce a family of context-dependent uncertainty sets:
$$
    \Big\{\mathcal{U}_\lambda(X) \subseteq \mathcal{Y}: \lambda\in\Lambda\subseteq\mathbb{R}_+\Big\},
$$
indexed by a robustness parameter $\lambda$, with the convention that larger $\lambda$ yields larger sets:
$\lambda_1\le\lambda_2 \ \Rightarrow \ \mathcal{U}_{\lambda_1}(X)\subseteq \mathcal{U}_{\lambda_2}(X)$ for all $X$. 
For instance, one may construct $\mathcal{U}_\lambda(X)$ as an euclidean ball centered at the conditional mean $\hat{g}(X) \approx \mathbb{E}[Y|X]$, where $\hat{g}$ is estimated from data, and take $\lambda$ as the radius of the ball.
Given $X$, the robust decision is obtained by solving:
\begin{equation}
    \label{eq:robust_opt}
    z_\lambda^*(X) \coloneqq \arg \min_{z \in \mathcal{Z}} \max_{y \in \mathcal{U}_\lambda(X)} f(y, z).
\end{equation}

A key challenge is to select $\lambda$ so that the robust decision strikes the right balance between \emph{regret} and \emph{miscoverage} as illustrated by Figure~\ref{fig:setup}: If the uncertainty set $\mathcal{U}_\lambda(X)$ is too small, the decision may fail to provide sufficient protection against adverse outcomes; If the set is too large, the solution to \eqref{eq:robust_opt} may be overly conservative and incur an undesirably large regret.
We assess each $\lambda$ using two performance metrics given a pair of $(X, Y)$:
\begin{align*}
    I_\lambda(X, Y)
    &\coloneqq \mathbbm{1}\left[ Y \notin \mathcal{U}_\lambda(X) \right],\\
    R_\lambda(X, Y)
    & \coloneqq f(Y,z_\lambda^*(X)) - \min_{z\in\mathcal{Z}} f(Y,z). 
\end{align*}
Specifically, for a given pair $(X, Y)$, $I_\lambda \in \{0, 1\}$ indicates whether the realized outcome $Y$ is excluded from the uncertainty set $\mathcal{U}_\lambda(X)$, and $R_\lambda$ measures the performance gap of adopting robust decision $z_\lambda^*(X)$ relative to the oracle decision under the realized $Y$.

The objective of this study is to answer two fundamental questions: For any prescribed robustness level $\lambda$,
\emph{how robust does it protect against unknown outcomes?} and  
\emph{how costly is this protection in terms of conservativeness?} 
Formally, we seek to construct estimators $\hat\alpha_I(\lambda)$ and $\hat\alpha_R(\lambda)$ that can be trusted to provide reliable assessments for any prespecified $\lambda \in \Lambda$:
\begin{align}
    \hat\alpha_I(\lambda) \ge &~ \mathbb{E}\left[ I_\lambda(X,Y) \right] 
   = 1 - \mathbb{P}\left[ Y \in \mathcal{U}_\lambda(X) \right], \label{eq:protection_bound}\\ 
    \hat\alpha_R(\lambda) \ge &~ \mathbb{E}\left[ R_\lambda(X,Y) \right].
    \label{eq:regret_bound}
\end{align}
Estimating these two upper bounds traces out the certified Pareto frontier that trades off between regret and miscoverage in the worst-case scenario, revealing how reducing miscoverage typically increases regret, and vice versa. By examining $\hat\alpha_I(\lambda)$ and $\hat\alpha_R(\lambda)$ across candidate $\lambda$, practitioners can select the point that best balances protection against conservativeness in line with their risk tolerance.

\section{Methodology}

This section first introduces a general statistical framework for ``inverse'' conformal risk control motivated by \cite{angelopoulos2022conformal, patel2024conformal}, designed to estimate an upper bound on risk defined with respect to arbitrary loss functions, and construct a Pareto frontier over both miscoverage rate \eqref{eq:protection_bound} and regret \eqref{eq:regret_bound} of a robust solution to problem \eqref{eq:robust_opt}. Then, we show that these estimates are reliable and valid, \ie, the expected upper bound is guaranteed to hold, and that they directly characterize the certified Pareto frontier as the robustness level $\lambda$ varies.
Finally, we extend our algorithm to retain validity when used for recalibration after selection by adopting a data splitting technique. 

\subsection{Proposed Algorithm}

For any prespecified $\lambda \in \Lambda$, let $\ell_\lambda: \mathcal X\times\mathcal Y\to\mathbb{R}$ be a family of nonnegative loss (\eg, miscoverage or regret).
We assume access to a calibration dataset of $n$ paired samples, denoted as $\mathcal{D} = \{(X_i,Y_i)\}_{i=1}^n$. 
Given a test data pair $(X_{n + 1}, Y_{n + 1})$, our goal is to construct an estimator, denoted $\hat{\alpha}_\ell(\lambda)$, from the calibration data $\mathcal{D}$ such that
\begin{equation}
    \label{eq:obj}
    \hat\alpha_\ell(\lambda) \ge \mathbb{E}\left[ \ell_\lambda(X_{n+1},Y_{n+1}) \right].
\end{equation}
This objective subsumes both \eqref{eq:protection_bound} and \eqref{eq:regret_bound}, and can be interpreted as an ``inverse'' form of the conformal risk control (CRC) \cite{angelopoulos2022conformal}.
Specifically, 
CRC begins by prescribing a desired risk level $\alpha$ on the expected loss and then finds $\lambda$ that guarantees:
\[
\mathbb{E}\left[\ell_\lambda(X_{n+1}, Y_{n+1})\right] \leq \alpha.
\]
By contrast, our framework reverses this perspective. 
Instead of taking a target risk level $\alpha$ as given, we take the uncertainty set---indexed by robustness level $\lambda$---as fixed, and then ask: \emph{what is the actual risk level guaranteed by this choice?}

Before describing our proposed algorithm, we first state two assumptions that are considered.
First, we assume the exchangeability between this dataset and the test data pair.
\begin{assumption}
    \label{ass:ex}
    The calibration data $\mathcal{D}$ and the test point $(X_{n+1},Y_{n+1})$ are drawn exchangeably from distribution $\mathcal{P}$.
\end{assumption}
Exchangeability is standard in the conformal prediction literature \cite{vovk2005algorithmic} and subsumes many common settings, including \textit{i.i.d.}.
Next, assume that there exists an essential supremum to the loss function.

\begin{assumption}
    \label{ass:bound}
    There exists a constant $B \ge 0$ such that, for all $\lambda \in \Lambda$,
    $
    \ell_\lambda(X,Y) \in [0,B]
    $
    almost surely.
\end{assumption}

This assumption inherits from \cite{angelopoulos2022conformal}, upon which our framework is built.
The constant $B$ is trivially known in many settings. For example, for miscoverage losses $I_\lambda \in \{0,1\}$, there is $B=1$.
Even in practical settings where $B$ is unknown, one may explicitly derive it using prior knowledge of the data distribution, or approximate it using data-driven estimates such as the sample maximum.

Given the two assumptions above, we propose a conformalized risk estimator as follows:
\begin{definition}
    \label{prop:estimator}
    Let $\bar{\ell}_n(\lambda)$ denote the average calibration loss, \ie,
    $
        \bar{\ell}_n(\lambda) \coloneqq \frac{1}{n}\sum_{i = 1}^n \ell_\lambda(x_i, y_i).
    $
    The proposed risk estimator $\hat{\alpha}_\ell(\lambda)$ is defined as
    \begin{equation}
    \begin{aligned}
        \label{eq:alpha-hat}
         \inf_{\alpha \in (0, B)} \Bigg\{\alpha: B - \bar{\ell}_n(\lambda) \ge \frac{\lceil(n+1)(B-\alpha)\rceil}{n} \Bigg\},
    \end{aligned}
    \end{equation}
\end{definition}
An upper bound to this estimator can be further derived as
\begin{equation}
    \label{eq:alphat-hat-sim}
    \tilde\alpha_\ell(\lambda) = \frac{n}{n + 1} ~ \bar{\ell}_n(\lambda) + \frac{B}{n + 1},
\end{equation}
which can be obtained by relaxing the ceiling operator in \eqref{eq:alpha-hat}.
This form requires only the empirical average calibration loss $\bar{\ell}_n(\lambda)$ plus an adjustment $B/(n+1)$ that accounts for the worst-case contribution of the unseen test point, guaranteed to lie in $[0, B]$ by Assumption~\ref{ass:bound}.
The complexity of computing $\bar{\ell}_n(\lambda)$ is linear in the number of calibration samples, and the adjustment is constant, making $\hat\alpha(\lambda)$ itself efficient to evaluate.

Finally, we construct the miscoverage-regret frontier using the above estimator as follows: for each candidate robustness level $\lambda \in \Lambda$, we first solve the robust problem in \eqref{eq:robust_opt} to obtain the decision rule $z_\lambda^*(\cdot)$, then evaluate the induced miscoverage and regret losses on calibration data. These empirical averages are corrected using the conservative adjustment in Proposition~\ref{prop:estimator} to yield valid upper-bound estimates of risk. Collecting the pairs $\big(\hat\alpha_I(\lambda),\hat\alpha_R(\lambda)\big)$ across all $\lambda$ forms the empirical frontier, which is then Pareto-pruned to remove dominated points.

We note that the primary computational bottleneck of the enumeration-based procedure is the need to solve \eqref{eq:robust_opt} for each (discretized) element of the index set $\Lambda$.
However, this cost is often manageable when $z^\ast_\lambda(\cdot)$ admits closed-form solutions or can be efficiently solved by off-the-shelf solvers, so this computational complexity is standard in the multi-objective optimization literature \cite{miettinen1999nonlinear}. We also validated this through our empirical analysis (\Cref{sec:exp3}).

\subsection{Theoretical Analysis}

We establish two key theoretical properties of the proposed estimator. First, we prove its validity, showing that under the exchangeability assumption, the proposed estimator $\hat \alpha_\ell(\lambda)$ upper bounds the true risk for any specification of the robustness parameter $\lambda$. 
Then, we demonstrate that under a majorant consistency assumption, the conservative trajectory traced by the estimator $(\hat \alpha_I(\lambda), \hat \alpha_R(\lambda))$ by varying $\lambda$ characterizes a certified Pareto frontier.
All proofs in this section can be found in \Cref{app:p-val} through \Cref{append:proof-pareto}.

\begin{theorem}[Validity]
    \label{thm:p-val}
    Under \Cref{ass:ex} and \ref{ass:bound}, the estimator $\hat \alpha_\ell(\lambda)$ in \eqref{eq:alpha-hat} satisfies
    \begin{equation}
        \label{eq:val}
        \mathbb{E}[\hat \alpha_\ell(\lambda)] \ge \mathbb{E}[\ell_\lambda(X_{n + 1}, Y_{n + 1})].
    \end{equation}
\end{theorem}

Additionally, under the \textit{i.i.d.} condition, we can establish a finite-sample error bound for the proposed estimator. 
\begin{proposition}[Finite-sample error bound]
    \label{prop:bound}
    Under \Cref{ass:bound}, suppose $\{(x_i, y_i)\}_{i = 1}^{n + 1}$ are i.i.d., then for any $\delta > 0$ and $\lambda \in \Lambda$, with probability at least $1 - \delta$,
    \begin{align}
        &~\left| \hat \alpha_\ell(\lambda) - \mathbb{E}[\ell_\lambda(X_{n + 1}, Y_{n + 1})] \right|
        \notag
        \\
        \leq &~
        \frac{B}{n + 1} \left( \sqrt{\frac{n}{2} \log\left(\frac{2}{\delta} \right)} + 2 + \frac{1}{B} \right) \eqqcolon \epsilon
        \label{eq:bound}
    \end{align}
\end{proposition}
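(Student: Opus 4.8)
The plan is to control the total error by inserting the ceiling-free surrogate $\tilde\alpha_\ell(\lambda)$ from \eqref{eq:alphat-hat-sim} and applying the triangle inequality,
\[
\bigl|\hat\alpha_\ell(\lambda) - \mu\bigr| \le \bigl|\hat\alpha_\ell(\lambda) - \tilde\alpha_\ell(\lambda)\bigr| + \bigl|\tilde\alpha_\ell(\lambda) - \mu\bigr|,
\]
where $\mu \coloneqq \mathbb{E}[\ell_\lambda(X_{n+1},Y_{n+1})]$. The first term is purely deterministic (it reflects only the rounding in \eqref{eq:alpha-hat}), while the second carries all the randomness, so I would bound them separately and combine at the end.

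For the deterministic gap I would sharpen direction $(a)$ of the proof of \Cref{thm:p-val}. Writing $m \coloneqq n\bigl(B - \bar\ell_n(\lambda)\bigr)$, the defining constraint $\lceil (n+1)(B-\alpha)\rceil \le m$ in \eqref{eq:alpha-hat} is equivalent to $(n+1)(B-\alpha) \le \lfloor m\rfloor$, so the infimum is attained at $\hat\alpha_\ell(\lambda) = B - \lfloor m\rfloor/(n+1)$, whereas dropping the ceiling gives $\tilde\alpha_\ell(\lambda) = B - m/(n+1)$. Hence
\[
0 \le \hat\alpha_\ell(\lambda) - \tilde\alpha_\ell(\lambda) = \frac{m - \lfloor m\rfloor}{n+1} < \frac{1}{n+1},
\]
which will contribute the additive $\tfrac{1}{B}$ once $B/(n+1)$ is factored out. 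Because this gap is nonnegative, the same estimate controls the error in both directions.

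For the stochastic part I would expand, using \eqref{eq:alphat-hat-sim},
\[
\tilde\alpha_\ell(\lambda) - \mu = \frac{n}{n+1}\bar\ell_n(\lambda) + \frac{B}{n+1} - \mu.
\]
Under the i.i.d. assumption the summands $\ell_\lambda(X_i,Y_i)$ are independent and, by \Cref{ass:bound}, lie in $[0,B]$, so a single application of Hoeffding's inequality gives, with probability at least $1-\delta$,
\[
\bigl|\bar\ell_n(\lambda) - \mu\bigr| \le B\sqrt{\frac{\log(2/\delta)}{2n}}.
\]
Applying the triangle inequality as $\bigl|\tfrac{n}{n+1}\bar\ell_n(\lambda) - \mu\bigr| \le \tfrac{n}{n+1}\bigl|\bar\ell_n(\lambda) - \mu\bigr| + \tfrac{\mu}{n+1}$ and then bounding $\mu \le B$ on both $\tfrac{\mu}{n+1}$ and the explicit $\tfrac{B}{n+1}$ term converts the Hoeffding estimate into the dominant $\tfrac{B}{n+1}\sqrt{\tfrac{n}{2}\log(2/\delta)}$ plus an additive $\tfrac{2B}{n+1}$. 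Summing this with the ceiling gap $\tfrac{1}{n+1}$ from the previous step and factoring out $B/(n+1)$ reproduces exactly $\epsilon$.

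The probabilistic content reduces to one Hoeffding bound, so the only genuine subtlety is the deterministic ceiling-gap step: establishing the exact floor characterization $\hat\alpha_\ell(\lambda) = B - \lfloor m\rfloor/(n+1)$ and checking that the infimum in \eqref{eq:alpha-hat} is interior to $(0,B)$ (so that the stated identity is not vacuous in the degenerate case $\bar\ell_n(\lambda)=B$). Everything else is routine bookkeeping of the $\tfrac{n}{n+1}$ factors and the additive $B/(n+1)$ corrections.
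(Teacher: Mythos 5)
Your proposal is correct and takes essentially the same route as the paper's proof: the identical triangle-inequality decomposition through $\tilde\alpha_\ell(\lambda)$, the same single Hoeffding bound on $\bar\ell_n(\lambda)$, and the same bookkeeping producing the $2B/(n+1)$ and $1/(n+1)$ additive terms. The only difference is that you justify the ceiling gap explicitly via the floor characterization $\hat\alpha_\ell(\lambda) = B - \lfloor m\rfloor/(n+1)$ (and flag the degenerate case $\bar\ell_n(\lambda)$ near $B$), whereas the paper simply asserts the $1/(n+1)$ bound from the definitions.
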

\Cref{prop:bound} can be combined with \Cref{thm:p-val} to produce a valid \textit{one-shot} estimator with high probability by offsetting it with the error $\epsilon$ defined in \eqref{eq:bound}:
\[
    \mathbb{P}\left\{\hat{\alpha}_\ell(\lambda ) + \epsilon \ge \mathbb{E}\left[\ell(X_{n+1}, Y_{n+1})\right] \right\} \ge 1-\delta.
\]
Since this error bound decays to zero at an $O(n^{-1/2})$ parametric rate, this means that the estimator is sample efficient and asymptotically consistent.
We note that under pure exchangeability alone, a bound similar to \Cref{prop:bound} is generally not available. The issue is that exchangeability allows strong dependency, which could result in arbitrarily loose bounds.

Next, we show that the miscoverage-regret trade-off curve is (weakly) decreasing under the standard \emph{majorant consistency} assumption
\cite{delage2010distributionally, rahimian2019distributionally, mohajerin2018data}.
This assumption guarantees that the trade-off curve preserves a monotone structure, thereby ensuring the existence of a well-defined Pareto frontier.

\begin{assumption}[Majorant consistency]
    \label{ass:majorant}
    For each robustness level $\lambda$ and context $x$, let
    $z^*_\lambda(x)$ be a measurable robust selector. For all $\lambda_1\le \lambda_2$, there is
    \begin{equation}
        \label{eq:majorant-consistency}
        \mathbb{E}\left[f\left(Y, z^*_{\lambda_1}(X)\right)\middle|X\right]
        \le
        \mathbb{E}\left[f\left(Y, z^*_{\lambda_2}(X)\right)\middle|X\right]
        ~\text{a.s.}
    \end{equation}
\end{assumption}

\Cref{ass:majorant} describes the condition that shrinking the uncertainty set does not increase the conditional expected realized cost. It holds in a broad range of standard optimization problems, including robust linear optimization with polyhedral or ellipsoidal uncertainty sets \cite{ben1999robust}, robust quadratic optimization, regression with convex loss functions (\eg, squared, absolute, or Huber loss) under norm-ball uncertainty \cite{ben2001lectures}, and robust classification with convex surrogate losses (\eg, hinge or logistic) \cite{bertsimas2011theory}. It is also directly implied by DRO formulations with Wasserstein or $\phi$-divergence balls, where monotonicity of the robust objective in the size of the ball is well established \cite{delage2010distributionally}.
Nevertheless, we emphasize that this assumption is not required for the practical applicability of our algorithm. Even when the resulting trade-off curve is non-monotone and a Pareto frontier does not naturally arise, one can manually prune the constructed frontier to recover a well-defined Pareto frontier (Step~\ref{step:prune} in \Cref{alg}).
For more intuition, we provide an example and a counterexample of \Cref{ass:majorant} in Appendix~\ref{app:majorant}.

Combined with the properties of nested uncertainty sets and bounded regret in our problem setting, we can prove that the image of the trade-off curve coincides with the Pareto frontier of the robust policy family.

\begin{proposition}[True Pareto frontier]
    \label{prop:pareto}
    Let $\alpha_I(\lambda)\coloneqq\mathbb{E}[I_\lambda(X,Y)]$ and $\alpha_R(\lambda)\coloneqq\mathbb{E}[R_\lambda(X,Y)]$.
    Under Assumption~\ref{ass:majorant}, for any $\lambda_1\le \lambda_2$,
    \[
        \alpha_I(\lambda_1)\ge \alpha_I(\lambda_2), \quad \alpha_R(\lambda_1)\le \alpha_R(\lambda_2).
    \]
    Hence, the parametric curve $\lambda\mapsto\big(\alpha_I(\lambda),\alpha_R(\lambda)\big)$ is (weakly) decreasing. Its image is the Pareto frontier within the policy family $\{z^*_\lambda:\lambda\in\Lambda\}$.
\end{proposition}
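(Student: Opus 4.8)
The plan is to decouple the statement into two independent monotonicity claims—one for miscoverage and one for regret—and then read off the Pareto characterization as a direct consequence of the resulting weakly decreasing structure. The two facts have distinct sources: the miscoverage claim follows purely from the nested structure of the uncertainty sets, whereas the regret claim is exactly where \Cref{ass:majorant} enters. Both are established by first arguing conditionally or pointwise and only then passing to expectations.

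For the miscoverage direction I would argue pointwise before taking expectations. Fix any $\lambda_1 \le \lambda_2$ and any realization $(X,Y)$. By the nesting convention $\mathcal{C}_{\lambda_1}(X) \subseteq \mathcal{C}_{\lambda_2}(X)$, so $Y \notin \mathcal{C}_{\lambda_2}(X)$ implies $Y \notin \mathcal{C}_{\lambda_1}(X)$; equivalently $I_{\lambda_1}(X,Y) \ge I_{\lambda_2}(X,Y)$ almost surely. Taking expectations yields $\alpha_I(\lambda_1) \ge \alpha_I(\lambda_2)$, with no distributional assumption beyond measurability required here.

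For the regret direction I would cancel the oracle term pointwise, which is the key simplification. Since $R_\lambda(X,Y) = f(Y, z_\lambda^*(X)) - \min_{z \in \mathcal{Z}} f(Y,z)$ and the subtracted oracle term does not depend on $\lambda$, for any $\lambda_1 \le \lambda_2$ we have the pointwise identity $R_{\lambda_2}(X,Y) - R_{\lambda_1}(X,Y) = f(Y, z_{\lambda_2}^*(X)) - f(Y, z_{\lambda_1}^*(X))$. Taking the conditional expectation given $X$ and applying the majorant consistency inequality \eqref{eq:majorant-consistency} gives $\mathbb{E}[R_{\lambda_2}-R_{\lambda_1}\mid X] = \mathbb{E}[f(Y, z_{\lambda_2}^*(X))\mid X] - \mathbb{E}[f(Y, z_{\lambda_1}^*(X))\mid X] \ge 0$ almost surely. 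The tower property then yields $\alpha_R(\lambda_1) \le \alpha_R(\lambda_2)$, and boundedness from \Cref{ass:bound} ensures all quantities are integrable so the manipulation is well defined.

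Combining the two claims shows that as $\lambda$ increases $\alpha_I$ weakly decreases while $\alpha_R$ weakly increases, so the curve $\lambda \mapsto (\alpha_I(\lambda),\alpha_R(\lambda))$ is weakly decreasing. To identify its image with the Pareto frontier of $\{z^*_\lambda:\lambda\in\Lambda\}$, I would take any $\lambda_1 < \lambda_2$ and note that the point at $\lambda_2$ has no larger miscoverage but no smaller regret than the point at $\lambda_1$, so neither strictly dominates the other as long as both inequalities are strict. The main obstacle is the degenerate case of a \emph{flat segment}—when $\alpha_I$ or $\alpha_R$ is constant over an interval of $\lambda$—because then one endpoint can weakly dominate the other and the raw image contains dominated points, so the naive identification ``image $=$ frontier'' is not literally exact. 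I would resolve this exactly as in \Cref{alg}: the Pareto-pruning step discards weakly dominated points, after which the surviving monotone image coincides with the non-dominated set. Since every achievable objective value is by construction attained by some $z^*_\lambda$, the frontier is trivially contained in the image, and the pruned curve is precisely the Pareto frontier within the policy family.
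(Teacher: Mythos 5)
Your proposal is correct and follows essentially the same route as the paper's proof: the identical pointwise nesting argument for $\alpha_I$, and the identical regret argument (cancel the $\lambda$-independent oracle term, apply \Cref{ass:majorant} conditionally on $X$, then the tower property). The only difference is at the very end: where the paper asserts that every image point is Pareto efficient under \emph{strict} dominance (noting ties only parenthetically), you correctly observe that under \emph{weak} dominance flat segments can produce dominated points in the raw image and handle this via the pruning step of \Cref{alg} --- a slightly more careful statement of the same conclusion, not a different argument.
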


\begin{corollary}[Certified Pareto frontier]
    \label{prop:robust-pareto}
    Given that estimators $\hat\alpha_I(\lambda), \hat\alpha_R(\lambda)$ satisfy
    \[
    \hat\alpha_I(\lambda)\ge\alpha_I(\lambda), \quad
    \hat\alpha_R(\lambda)\ge\alpha_R(\lambda),
    \quad \forall \lambda \in \Lambda.
    \]
    Then the set
    \(
    \widehat{\mathcal{F}} \coloneqq \left\{ \big(\hat\alpha_I(\lambda),\hat\alpha_R(\lambda)\big): \lambda \in \Lambda \right\}
    \)
    forms a conservative outer approximation of the true risk set. 
\end{corollary}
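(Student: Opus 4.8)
The plan is to treat the corollary as a deterministic set-containment statement, conditional on the coordinatewise bounds $\hat\alpha_I(\lambda)\ge\alpha_I(\lambda)$ and $\hat\alpha_R(\lambda)\ge\alpha_R(\lambda)$, which \Cref{thm:p-val} and \Cref{prop:bound} are designed to guarantee (in expectation and, after the $\epsilon$-offset, with high probability, respectively). First I would make the phrase ``conservative outer approximation'' precise by passing from the raw point clouds to their dominance (upper-right) closures. Interpreting the miscoverage–regret pair as a bi-objective to be minimized, define the true attainable risk set and its certified counterpart as
\[
\mathcal R \coloneqq \bigcup_{\lambda\in\Lambda}\bigl\{(a,b): a\ge\alpha_I(\lambda),\ b\ge\alpha_R(\lambda)\bigr\},\qquad
\widehat{\mathcal R}\coloneqq \bigcup_{\lambda\in\Lambda}\bigl\{(a,b): a\ge\hat\alpha_I(\lambda),\ b\ge\hat\alpha_R(\lambda)\bigr\}.
\]
Here $\mathcal R$ is exactly the region of risk levels the policy family $\{z^*_\lambda\}$ can guarantee, and its lower-left boundary is the true Pareto frontier identified in \Cref{prop:pareto}.

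Second, the core of the argument is a short pointwise-dominance step lifted to set containment. Fix any $(a,b)\in\widehat{\mathcal R}$; by definition there is some $\lambda$ with $a\ge\hat\alpha_I(\lambda)$ and $b\ge\hat\alpha_R(\lambda)$. Chaining with the hypothesis gives $a\ge\hat\alpha_I(\lambda)\ge\alpha_I(\lambda)$ and $b\ge\hat\alpha_R(\lambda)\ge\alpha_R(\lambda)$, so $(a,b)\in\mathcal R$ through the same index $\lambda$. Hence $\widehat{\mathcal R}\subseteq\mathcal R$: every certified point weakly dominates (sits on the worse side of) the true point at the same $\lambda$, so the certified frontier $\widehat{\mathcal{F}}$ lies entirely inside $\mathcal R$ and forms an outer, pessimistic-side envelope of the true Pareto frontier. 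This is precisely the ``conservative'' guarantee, since any $(a,b)$ certified attainable is genuinely attainable by the true policy and the realized risk is never understated.

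Third, I would connect this to \Cref{prop:pareto} and to the Pareto-pruning in \Cref{alg}. From \Cref{prop:pareto}, under \Cref{ass:majorant} the map $\lambda\mapsto(\alpha_I(\lambda),\alpha_R(\lambda))$ is weakly decreasing, so the true frontier is exactly the image of this curve and $\mathcal R$ is its upward closure; this is what lets us speak of a single well-defined frontier rather than an arbitrary point cloud. I would then observe that removing dominated certified points (line~9 of \Cref{alg}) leaves $\widehat{\mathcal R}$ unchanged, because a dominated point contributes nothing to a union of upper-right cones, so the pruned frontier certifies the same region.

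The step I expect to require the most care is the formalization rather than the algebra: fixing the correct direction of ``outer approximation'' and checking that it is robust to the fact that the \emph{certified} trajectory need not itself be monotone. Unlike the true curve, $(\hat\alpha_I(\lambda),\hat\alpha_R(\lambda))$ may exhibit small non-monotonicities from the finite-sample correction in \eqref{eq:alphat-hat-sim} and from sampling noise. I would handle this by working throughout with the dominance closure $\widehat{\mathcal R}$ and its Pareto-pruned boundary rather than with the raw curve, so that the containment $\widehat{\mathcal R}\subseteq\mathcal R$ holds verbatim irrespective of monotonicity. Finally I would emphasize that the guarantee is conditional: it holds deterministically whenever the coordinatewise bounds hold, and composing with \Cref{thm:p-val} or \Cref{prop:bound} upgrades it to an in-expectation or high-probability statement.
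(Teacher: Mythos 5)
Your proposal is correct and follows essentially the same route as the paper's proof: the core step in both is the pointwise product-order dominance $(\alpha_I(\lambda),\alpha_R(\lambda))\preceq(\hat\alpha_I(\lambda),\hat\alpha_R(\lambda))$ at each fixed $\lambda$, obtained by chaining the hypothesized coordinatewise bounds, which immediately gives the conservative outer approximation and, after discarding dominated points, the certified frontier. Your dominance-closure formalization of $\widehat{\mathcal R}\subseteq\mathcal R$, the pruning-invariance remark, and the handling of possible non-monotonicity of the certified curve are careful elaborations of that same one-line argument rather than a genuinely different approach.
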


By \Cref{prop:robust-pareto}, the lower-left envelope of $\widehat{\mathcal{F}}$ traces the certified Pareto frontier, thus providing guaranteed trade-offs between 
regret and miscoverage.

\subsection{Re-calibration After Selection}

\begin{algorithm}[!t]
\caption{\texttt{CREME}}
\label{alg}
\begin{algorithmic}[1]
    \REQUIRE Calibration data $\mathcal{D} \coloneqq \{(x_i, y_i)\}_{i=1}^n$; robustness parameter set $\{\lambda_i\} \subseteq \Lambda$; 
    \STATE $\mathcal{I}_1, \mathcal{I}_2 \leftarrow$ Randomly split $\{1, \ldots, n\}$. 
    \FOR{$j \in \left\{1, 2 \right\}$}
        \STATE $\hat{\mathcal{F}}^{(j)} \leftarrow \emptyset$ initialize Pareto frontier;
        \FOR{$\lambda \in \{\lambda_i\}$}
            \STATE $\bar{I}_n^{(j)}(\lambda) \leftarrow \sum_{i \in \mathcal{I}_j} I_\lambda(x_i, y_i) / |\mathcal{I}_j|$; 
            \STATE $\bar{R}_n^{(j)}(\lambda) \leftarrow \sum_{i \in \mathcal{I}_j} R_\lambda(x_i, y_i) / |\mathcal{I}_j|$;
            \STATE $\tilde \alpha_{R}^{(j)}(\lambda) \leftarrow$ compute \eqref{eq:alphat-hat-sim} by $\bar{R}_n^{(j)}(\lambda)$, $B=R_\text{max}$;
            \STATE $\tilde \alpha_{I}^{(j)}(\lambda) \leftarrow$ compute \eqref{eq:alphat-hat-sim} by $\bar{I}_n^{(j)}(\lambda)$, $B=1$;
            \STATE $\hat{\mathcal{F}}^{(j)} \leftarrow \hat{\mathcal{F}}^{(j)} \cup \{ (\hat \alpha_{I}^{(j)}(\lambda), \hat \alpha_{R}^{(j)}(\lambda)) \}$;
        \ENDFOR
        \STATE \label{step:prune} Remove dominated points from $\hat{\mathcal{F}}^{(j)}$;
    \ENDFOR
    \STATE $\hat \lambda \leftarrow$ Decision maker selects $\lambda$ from observing $\hat{\mathcal{F}}^{(1)}$
    \RETURN Post-hoc valid estimate $(\hat \alpha_{I}^{(2)}(\hat \lambda), \hat \alpha_{R}^{(2)}(\hat \lambda))$.
\end{algorithmic}
\end{algorithm}

In the previous sections, our analysis was carried out under the assumption that the robustness parameter $\lambda \in \Lambda$ is specified \emph{a priori}.
Under this setting, our estimator is used purely for \emph{risk assessment}: for each candidate robustness level $\lambda$, it provides conservative estimates of the miscoverage and regret that would be incurred by deploying the corresponding robust decision.
Our theoretical results establish that, when $\lambda$ is fixed in advance, the estimator enjoys exact finite-sample validity guarantees.

In practical decision-making settings, however, it is often desirable for the robustness parameter to be chosen \emph{a posteriori}, after inspecting the estimated trade-offs.
For example, after observing the estimated Pareto frontier $\hat{\mathcal{F}}$, a decision-maker may select a robustness level that optimizes a subjective miscoverage--regret preference,
\begin{equation}
    \label{eq:lam-hat}
    \hat{\lambda}
    =
    \argmax_{\lambda \in \Lambda}
    g\left(\hat{\alpha}_I(\lambda), \hat{\alpha}_R(\lambda)\right),
\end{equation}
where $g : \mathbb{R}^2 \to \mathbb{R}$ encodes the decision-maker’s risk preference.
The selected robustness level $\hat{\lambda}$ is then deployed, and the corresponding estimated risks are reported.

This post-hoc selection step fundamentally alters the statistical setting.
Because $\hat{\lambda}$ is chosen based on the same calibration data used to estimate risk, the loss sequence
$\{\ell_{\hat{\lambda}}(X_i, Y_i)\}_{i=1}^{n+1}$
is no longer exchangeable.
As a result, the exact validity guarantee of \Cref{thm:p-val} may fail to hold.
The following corollary formalizes this loss of validity.
\begin{corollary}[Post-hoc validity degradation]
    \label{cor:post-hoc}
    Under Assumption~\ref{ass:ex} and Assumption~\ref{ass:bound}, let $\hat{\lambda}$ be defined as in \eqref{eq:lam-hat}.
    Then the expectation of the estimator $\hat{\alpha}_\ell(\hat{\lambda})$ defined in \eqref{eq:alpha-hat} is lower bounded by
    \begin{equation*}
        \mathbb{E}\bigl[\hat{\alpha}_\ell(\hat{\lambda})\bigr]
        \ge
        \mathbb{E}\bigl[\ell_{\hat{\lambda}}(X_{n+1}, Y_{n+1})\bigr]
        - \Delta(g, \mathcal{P}),
    \end{equation*}
    where the error term is defined as
    \begin{equation*}
        \Delta(g, \mathcal{P})
        \coloneqq
        \sup_{1 \le i \le n}
        \biggl\lvert
        \mathbb{E}\bigl[\ell_{\hat{\lambda}}(X_i, Y_i)\bigr]
        -
        \mathbb{E}\bigl[\ell_{\hat{\lambda}}(X_{n+1}, Y_{n+1})\bigr]
        \biggr\rvert.
    \end{equation*}
\end{corollary}
Intuitively, the error term $\Delta(g, \mathcal{P})$ depends on how much asymmetry that selecting $\hat \lambda$ by $g$ introduces to the distribution $\mathcal{P}$.
It arises because the data-dependent choice of $\hat{\lambda}$ leaks information from the calibration sample into the loss evaluation, thereby breaking the symmetry between calibration points and the test point.
This violation of exchangeability is well known to undermine conformal-style guarantees \cite{barber2023conformal}.
Consequently, the estimator may no longer be conservative after post-hoc selection.
A detailed proof is provided in \Cref{app:post-hoc}.

We address this issue using a simple and effective data-splitting strategy \cite{cox1975note, rasines2023splitting}, which restores exact finite-sample validity under post-hoc selection.
Specifically, we randomly partition the calibration dataset $\{(X_i, Y_i)\}_{i=1}^n$ into two disjoint subsets $\mathcal{D}_1$ and $\mathcal{D}_2$.
Using $\mathcal{D}_1$, we construct a \emph{pre-hoc estimator} $\hat{\alpha}_\ell^{(1)}(\lambda)$ and form the corresponding Pareto frontier $\hat{\mathcal{F}}^{(1)}$.
Only this pre-hoc Pareto frontier is revealed to the decision-maker and used to select the robustness level $\hat{\lambda}$ via \eqref{eq:lam-hat}.
Independently, using $\mathcal{D}_2$, we construct a \emph{post-hoc estimator} $\hat{\alpha}_\ell^{(2)}(\lambda)$, which is then evaluated at the selected robustness level $\hat{\lambda}$ to produce the re-calibrated miscoverage and regret estimates
$\big(\hat{\alpha}_I^{(2)}(\hat{\lambda}), \hat{\alpha}_R^{(2)}(\hat{\lambda})\big)$. The pseudocode in \Cref{alg} summarizes our entire proposed method, which we term as \ul{C}onformal \ul{RE}gret \ul{M}iscoverage \ul{E}stimate (\texttt{CREME}).


\section{Experiments}

\begin{figure*}[!t]
    \centering
    \includegraphics[width=1.0\linewidth]{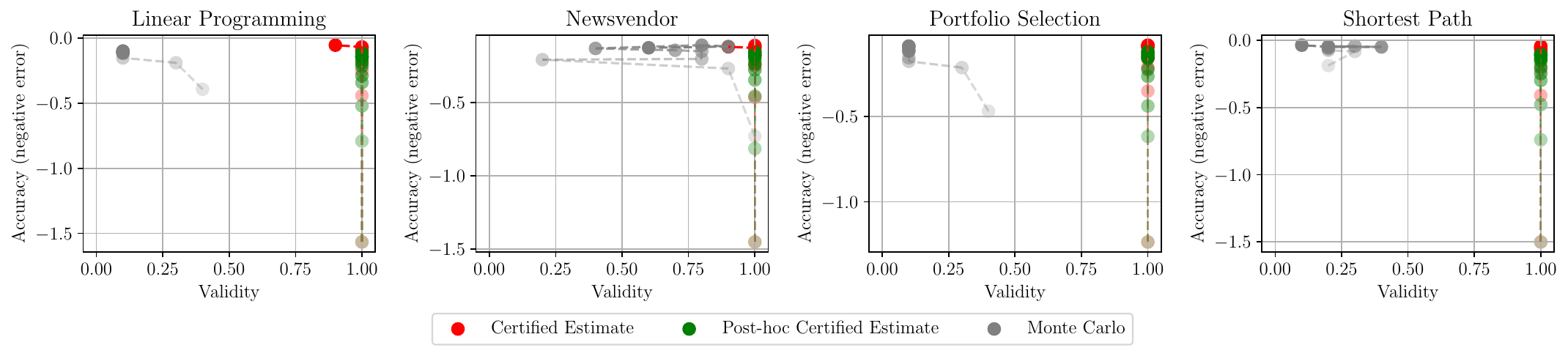}
    \caption{Validity-accuracy tradeoff curves under four optimization settings. Connected dots trace estimator performance as the number of calibration samples $n$ increases from $1$ to $50$, with greater opacity indicating smaller $n$. The proposed method is shown in red, and the baseline Monte Carlo estimator in gray. Both axes represent metrics where higher values indicate better performance ($\uparrow$), so methods appearing closer to the upper-right corner are more desirable.}
    \label{fig:exp1}
\end{figure*}

\begin{figure*}[!t]
    \centering
    \includegraphics[width=1.0\linewidth]{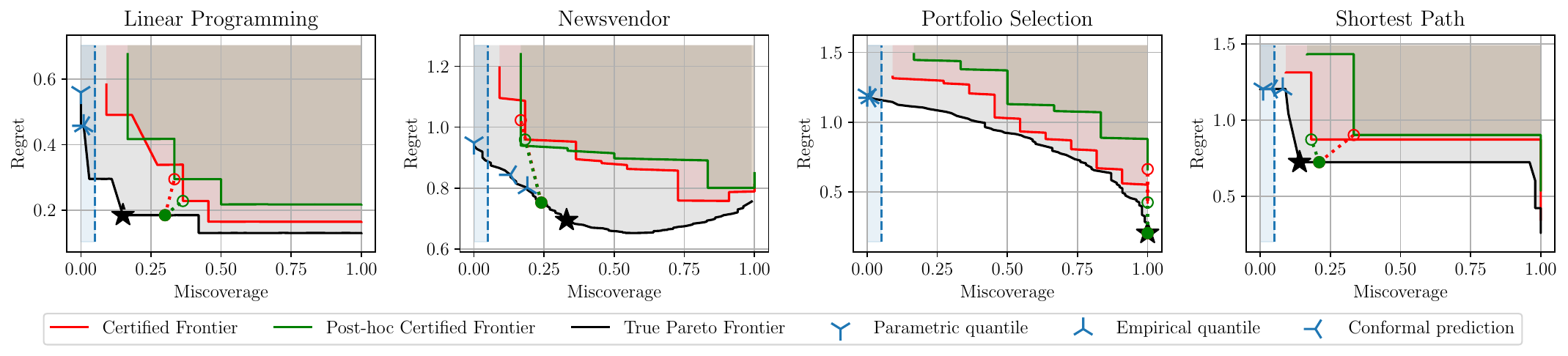}
    \caption{
    Miscoverage-regret tradeoff Pareto frontiers. Given some prespecified preference function, there is an optimal tradeoff point (black star) that can be computed from the true Pareto frontier (black line).
    We can obtain a certified tradeoff point (red dot) and a post-hoc tradeoff point (green dot) by using \texttt{CREME}.
    Blue markers represent tradeoff points derived from the three baseline methods that identify $\lambda$ values ensuring the miscoverage rate remains below $5\%$ (vertical dashed blue line).
    }
    \label{fig:exp2}
\end{figure*}

This section presents the numerical evaluation of \texttt{CREME} (\Cref{alg}). 
We focus on four representative optimization paradigms for decision-making problems:  
($i$) linear programming;  
($ii$) the newsvendor problem;  
($iii$) portfolio optimization; and  
($iv$) the shortest path problem.  
These problems are chosen because they exemplify widely studied optimization settings, and their regret can be efficiently computed using convex solvers.
Their supremum $B$ defined in \Cref{ass:bound} is either known by design or approximated through the empirical maximum $\max_{1 \leq i \leq n} \ell_\lambda(x_i, y_i)$. 
Additional experimental configuration details, explanations, and additional experiment results can be found in \Cref{app:exp}.

\subsection{Validity-Accuracy Analysis}

We first evaluate the quality of the miscoverage-regret frontier produced by \texttt{CREME} using two metrics: \emph{validity} and \emph{accuracy}. 
Validity measures whether the estimator successfully upper bounds the true risk, defined as the proportion of trials in which both $\hat{\mathbb{E}}\left[\hat \alpha_R(\lambda)\right]$ and $\hat{\mathbb{E}}\left[\hat \alpha_I(\lambda)\right]$ are upper bounding estimates \footnote{$\hat{\mathbb{E}}[\cdot]$ is computed as the sample average over $100$ iterations.}. 
Accuracy assesses the tightness of this bound, defined as the negative average distance between $(\hat \alpha_I(\lambda), \hat \alpha_R(\lambda))$ and $(\alpha_I(\lambda), \alpha_R(\lambda))$.
They are computed from averaging across $20$ independent trials.

\Cref{fig:exp1} presents the evaluation results comparing \texttt{CREME} with the empirical Monte Carlo estimator (\ie, $\bar \ell_n(\lambda)$).
It can be seen that across all optimization settings, the Monte Carlo estimator (gray) tends to lie toward the left of the plot, while \texttt{CREME} (red and green) concentrates in the upper-right region.  
This difference arises because naive baselines are primarily designed for consistent estimation, achieving high accuracy without validity guarantees.  
In contrast, our proposed procedure strikes a balanced trade-off, maintaining strong performance on both validity and accuracy.
These results highlight that \texttt{CREME} is well-suited for high-stakes decision-making tasks where both accuracy and validity are critical.

\Cref{fig:exp1} also provides insights into the finite-sample behavior of \texttt{CREME}.  
In the small-sample regime with a small $n$, \texttt{CREME} exhibits relatively low accuracy but high valid portion, positioning itself in the lower-right region of the plot.  
This behavior is desirable: under uncertainty and limited resources, \texttt{CREME} prioritizes conservatism by erring on the side of validity.  
As $n$ increases, the trajectory rises at around an exponential rate along the accuracy axis while maintaining full validity.
The trajectory's accuracy eventually converges to $0$, demonstrating asymptotic consistency.

\begin{table*}[t]
\centering
\caption{Sensitivity analysis of \texttt{CREME} against number of samples $n$ and the robustness level index set size $|\Lambda|$.}
\label{tab:ablation}
\begin{adjustbox}{width=0.85\textwidth}
\begin{tabular}{cccccccccc}
\toprule \midrule
 &  & \multicolumn{2}{c}{Linear Programming} & \multicolumn{2}{c}{Newsvendor} & \multicolumn{2}{c}{Portfolio Optimization} & \multicolumn{2}{c}{Shortest Path} \\
 \cmidrule(lr){3-4}\cmidrule(lr){5-6}\cmidrule(lr){7-8}\cmidrule(lr){9-10}
 &  & Gap & Time (s) & Gap & Time (s) & Gap & Time (s) & Gap & Time (s) \\
\midrule
\multirow[c]{3}{*}{$|\Lambda|=10$} & $n=10$ & $ 0.10 \pm  0.03$ & $ 0.58 \pm  0.03$ & $ 0.19 \pm  0.08$ & $ 0.00 \pm  0.00$ & $ 0.16 \pm  0.07$ & $ 0.84 \pm  0.10$ & $ 0.11 \pm  0.03$ & $ 0.72 \pm  0.04$ \\
 & $n=20$ & $ 0.09 \pm  0.03$ & $ 0.81 \pm  0.03$ & $ 0.12 \pm  0.04$ & $ 0.00 \pm  0.00$ & $ 0.13 \pm  0.07$ & $ 1.14 \pm  0.12$ & $ 0.06 \pm  0.01$ & $ 0.99 \pm  0.06$ \\
 & $n=30$ & $ 0.08 \pm  0.01$ & $ 1.00 \pm  0.04$ & $ 0.13 \pm  0.04$ & $ 0.00 \pm  0.00$ & $ 0.09 \pm  0.02$ & $ 1.50 \pm  0.13$ & $ 0.06 \pm  0.01$ & $ 1.20 \pm  0.07$ \\
\midrule
\multirow[c]{3}{*}{$|\Lambda|=20$} & $n=10$ & $ 0.10 \pm  0.03$ & $ 0.58 \pm  0.03$ & $ 0.19 \pm  0.08$ & $ 0.00 \pm  0.00$ & $ 0.16 \pm  0.07$ & $ 0.84 \pm  0.10$ & $ 0.11 \pm  0.03$ & $ 0.72 \pm  0.04$ \\
 & $n=20$ & $ 0.09 \pm  0.03$ & $ 0.81 \pm  0.03$ & $ 0.12 \pm  0.04$ & $ 0.00 \pm  0.00$ & $ 0.13 \pm  0.07$ & $ 1.14 \pm  0.12$ & $ 0.06 \pm  0.01$ & $ 0.99 \pm  0.06$ \\
 & $n=30$ & $ 0.08 \pm  0.01$ & $ 1.00 \pm  0.04$ & $ 0.13 \pm  0.04$ & $ 0.00 \pm  0.00$ & $ 0.09 \pm  0.02$ & $ 1.50 \pm  0.13$ & $ 0.06 \pm  0.01$ & $ 1.20 \pm  0.07$ \\
\midrule
\multirow[c]{3}{*}{$|\Lambda|=30$} & $n=10$ & $ 0.09 \pm  0.03$ & $ 1.73 \pm  0.06$ & $ 0.16 \pm  0.07$ & $ 0.00 \pm  0.00$ & $ 0.17 \pm  0.08$ & $ 2.50 \pm  0.28$ & $ 0.11 \pm  0.04$ & $ 2.13 \pm  0.08$ \\
 & $n=20$ & $ 0.08 \pm  0.02$ & $ 2.42 \pm  0.09$ & $ 0.11 \pm  0.04$ & $ 0.00 \pm  0.00$ & $ 0.13 \pm  0.07$ & $ 3.43 \pm  0.34$ & $ 0.05 \pm  0.01$ & $ 2.93 \pm  0.15$ \\
 & $n=30$ & $ 0.06 \pm  0.02$ & $ 2.98 \pm  0.13$ & $ 0.11 \pm  0.04$ & $ 0.00 \pm  0.00$ & $ 0.08 \pm  0.03$ & $ 4.49 \pm  0.44$ & $ 0.04 \pm  0.01$ & $ 3.63 \pm  0.25$ \\
\midrule
\bottomrule
\end{tabular}
\end{adjustbox}
\end{table*}

\subsection{Decision Quality Evaluation}
\label{sec:exp2}

We also assess the quality of the robustness parameter selected based on \texttt{CREME}.
We show that given any pre-specified weight that encodes the decision-maker’s preference for the miscoverage-regret tradeoff, \texttt{CREME} can identify $\hat \lambda$ that closely approximates the optimal $\lambda^*$, 
providing high-quality robustness calibration for decision-making.

We include three baselines:
($i$) Parametric quantile: chooses $\hat \lambda$ as the $95\%$-quantile of $\|Y - \mu\|_\infty$ assuming $Y \sim \mathcal{N}(\hat \mu, \hat \Sigma)$, with $\hat \mu$ and $\hat \Sigma$ estimated from the data.
($ii$) Empirical quantile: $\lambda$ is taken as the $95\%$ empirical quantile of $|y_i - \bar y|$.
($iii$) Conformal prediction: uses $\mathbb{E}[Y]$ as the base predictor and construct $95\%$-valid $\ell_\infty$-ball conformal prediction set, where $\hat \lambda$ is defined as its radius \cite{patel2024conformal}.
These baselines represent the typical practice for robustness parameter selection, where the user would first specify a small miscoverage level (\eg, $5\%$) and then determine a data-driven $\hat \lambda$ that achieves the desired coverage. The ground truth Pareto frontier is approximated via $100$ samples from the true distribution enumerating over $\lambda \in \Lambda$.

\Cref{fig:exp2} visualizes the evaluated Pareto frontier and tradeoff outcome.
It can be seen that the estimated Pareto frontiers (red and green) lie closely above the true frontier (black), with the post-hoc certified frontier (green) consistently lying slightly higher above its standard counterpart, providing a slightly conservative but safer miscoverage-regret tradeoff characterization.
Both select robustness level that is close to $\lambda^*$ (black dot) and showcase similar miscoverage-regret tradeoff.
On the other hand, the tradeoff points selected by the three baselines achieve low miscoverage but suffer from high regret.
This is because these baselines, similar to many existing robust set specifying heuristics, are designed to conservatively control miscoverage and thus myopically select suboptimal robustness parameters, leading to an undesirable miscoverage-regret tradeoff selection. On the other hand, \texttt{CREME} is capable of guiding the selection of appropriate robustness parameters that closely resemble its true optimal value, supporting a balanced decision calibration.

On a side note, observe in \Cref{fig:exp2} that the ground-truth curve for the newsvendor and portfolio selection problems contains points that are not strictly on the Pareto frontier (\ie, non-dominant points). This observation is consistent with \Cref{ass:majorant} and \Cref{prop:pareto}, so additional pruning must be done if one desires the full Pareto frontier.

\subsection{Ablation Studies}
\label{sec:exp3}

Finally, we conduct an ablation study on \texttt{CREME}'s performances under different hyperparameter specifications: the number of samples $n$ and the robustness level index set size $|\Lambda|$.
We examine two performance metrics: ($i$) Gap: the approximation gap of the estimated Pareto frontier to its true counterpart, measured by their average distance (\ie, negative of the accuracy metric);
($ii$) Time (s): the execution time (seconds) for \texttt{CREME} to construct a full estimated Pareto frontier.  
Numbers are reported as mean $\pm$ standard deviation computed from $20$ trials.

\Cref{tab:ablation} summarizes the experimental results. As $n$ and $|\Lambda|$ increase, the approximation gap decreases, consistent with our theoretical bound in \Cref{prop:bound}.
At the same time, larger $n$ and $|\Lambda|$ lead to increased runtime, which peaks at approximately $5$ seconds across all configurations. This computational cost is typically manageable in practice since the construction of the frontier is usually a one-shot procedure.
In particular, the runtime for the newsvendor setting remains uniformly low, owing to the availability of closed-form solutions.
Overall, these results demonstrate that our approach achieves both strong empirical performance and computational efficiency suitable for practical deployment.

\section{Conclusion  and Discussions}

We proposed \texttt{CREME}, a conformalized framework for robust decision-making that certifies both miscoverage and regret across robustness levels, enabling practitioners to select appropriate robustness in optimization design. Theoretically, we showed that under mild conditions, the estimator provides conservative, distribution-free finite-sample guarantees and traces a certified Pareto frontier for the miscoverage--regret tradeoff. Across four optimization implementations, \texttt{CREME} produces frontiers with high validity and accuracy and guides robustness selection to balance miscoverage and regret. Thus, \texttt{CREME} offers a principled alternative to ad hoc calibration, addressing a key gap in the existing literature.

There are two limitations with this work:
($i$) Our theoretical analysis relies on knowing the upper bound $B$ of the loss function (\Cref{ass:bound}), which inherits from \citep{angelopoulos2022conformal}. However, we noticed that in practice $B$ can be replaced with data-driven estimates without significant harm to the performance (see \Cref{app:exp}), which motivates potential future work to principally analyze these estimators and develop new conformal algorithms that avoid the bounded loss assumption altogether.
($ii$) Our work uses data splitting to address post-hoc invalidity, which comes at the cost of reducing sample efficiency.
We refer readers to \citep{berk2013valid, zrnic2023post} for recent developments in post-hoc inference, and expect future work to integrate these techniques to further improve sample efficiency\footnote{These approaches do not uniformly dominate data splitting, as they typically rely on additional restrictive assumptions.}.

Additionally, there are two other promising extensions of the current framework, including ($i$) extending the framework to nonexchangeable settings (\eg, time series), and ($iii$) maintaining validity in the presence of optimization solver error.
For ($i$), we hypothesize that a weighted version of our estimator, using weights similar to those in \cite{tibshirani2019conformal}, could be developed to account for non-exchangeability in the data;
To achieve ($iii$), one can introduce an offset term that scales with the solver suboptimality gap (see \Cref{app:ext}).
We leave the details to be developed in potential future works.

\section*{Acknowledgements}

The authors acknowledge support from the 2024 Block Center Seed Fund at Carnegie Mellon University, which partially supported this research.

\section*{Impact Statement}

This paper presents work whose goal is to advance the field of machine learning. There are many potential societal consequences of our work, none of which we feel must be specifically highlighted here.

\bibliographystyle{icml2026}
\bibliography{ref}

\newpage
\appendix
\onecolumn

\section{Proof of \Cref{thm:p-val}}
\label{app:p-val}

\begin{proof}
    We prove by showing:
    \begin{equation}
        \label{eq:chain}
        \mathbb{E} \left[ \hat \alpha_\ell(\lambda) \right] \overset{(a)}{\geq} \mathbb{E} \left[ \tilde \alpha_\ell(\lambda) \right] \overset{(b)}{\geq} \mathbb{E}[\ell_\lambda(X_{n + 1}, Y_{n + 1})].
    \end{equation}
    To prove $(a)$, notice that $\tilde \alpha_\ell(\lambda)$ is equivalent to \eqref{eq:alpha-hat} after removing the ceiling operator, and by 
    $$
    \frac{(n+1)(B-\hat \alpha_\ell(\lambda))}{n} \le \frac{\lceil(n+1)(B-\hat \alpha_\ell(\lambda))\rceil}{n} \overset{\eqref{eq:alpha-hat}}{\le}
    B - \bar{\ell}_n(\lambda),
    $$
    therefore $\hat \alpha_\ell(\lambda) \geq \tilde \alpha_\ell(\lambda)$ almost surely, which implies $(a)$.
    To prove $(b)$,
    notice that \Cref{ass:ex} implies
    \begin{equation}
        \label{eq:chain-eq}
        \mathbb{E}\left[ \ell_\lambda(X_{1}, Y_{1})\right]
        = \ldots
        = 
        \mathbb{E}\left[ \ell_\lambda(X_{n}, Y_{n})\right]
        = \mathbb{E}\left[ \ell_\lambda(X_{n+1}, Y_{n+1})\right],
    \end{equation}
    we then expand the difference of two sides of $(b)$ by using the definition of $\tilde \alpha_\ell(\lambda)$ in \eqref{eq:alphat-hat-sim}:
    \begin{align*}
        & \mathbb{E}\left[ \tilde \alpha_\ell(\lambda) \right] - \mathbb{E}[\ell_\lambda(X_{n + 1}, Y_{n + 1})] = \frac{1}{n+1} \sum_{i=1}^n \mathbb{E}[\ell_\lambda(X_i, Y_i)] + \frac{B}{n+1} - \mathbb{E}[\ell_\lambda(X_{n + 1}, Y_{n + 1})] \\
        = &~ \frac{n}{n+1} \mathbb{E}[\ell_\lambda(X_{n+1}, Y_{n+1})] + \frac{B}{n+1} - \mathbb{E}[\ell_\lambda(X_{n + 1}, Y_{n + 1})] =
        \frac{B - \mathbb{E}[\ell_\lambda(X_{n + 1}, Y_{n + 1})]}{n + 1} \geq 0.
    \end{align*}
    where the last inequality follows from \Cref{ass:bound}. This proves $(b)$ holds and thus concludes the proof.
\end{proof}

\section{Proof of \Cref{prop:bound}}
\label{app:bound}

\begin{proof}[Proof of \Cref{prop:bound}]
    Notice the following decomposition:
    $$
    \left| \hat \alpha_\ell(\lambda) - \mathbb{E}[\ell_\lambda(X_{n + 1}, Y_{n + 1})] \right| \leq \left| \hat \alpha_\ell(\lambda) - \tilde \alpha_\ell(\lambda) \right|
    + \left| \tilde \alpha_\ell(\lambda) - \mathbb{E}[\ell_\lambda(X_{n + 1}, Y_{n + 1})]\right|.
    $$
    The first term can be upper-bounded by $1 / (n + 1)$ by the definition of $\hat \alpha_\ell(\lambda)$ and $\tilde \alpha_\ell(\lambda)$.
    The second term can be further expanded as
    \begin{align*}
        \left| \tilde \alpha_\ell(\lambda) - \mathbb{E}[\ell_\lambda(X_{n + 1}, Y_{n + 1})]\right|
        & = \left| \frac{n}{n + 1} \bar \ell_n(\lambda) + \frac{B}{n + 1} - \mathbb{E}[\ell_\lambda(X_{n + 1}, Y_{n + 1})] \right| \\
        & = \left| \frac{n}{n + 1} \left\{ \bar \ell_n(\lambda) - \mathbb{E}[\ell_\lambda(X_{n + 1}, Y_{n + 1})] \right\} + \frac{B}{n + 1} - \frac{1}{n + 1} \mathbb{E}[\ell_\lambda(X_{n + 1}, Y_{n + 1})] \right| \\
        & \leq \frac{n}{n + 1} \left| \bar \ell_n(\lambda) - \mathbb{E}[\ell_\lambda(X_{n + 1}, Y_{n + 1})] \right| + \frac{2B}{n + 1},
    \end{align*}
    where for the last inequality we used the bounding condition (\Cref{ass:bound}). Observe that by the \textit{i.i.d.} condition, using Hoeffding's inequality, we know that for any $\delta > 0$, there is
    $$
    \left| \bar \ell_n(\lambda) - \mathbb{E}[\ell_\lambda(X_{n + 1}, Y_{n + 1})]\right| \leq B \cdot \sqrt{\frac{1}{2n} \log\left(\frac{2}{\delta} \right)}, \quad w.p. \geq 1- \delta.
    $$
    Plugging this error term into the expansion above, we get
    $$
    \left| \tilde \alpha_\ell(\lambda) - \mathbb{E}[\ell_\lambda(X_{n + 1}, Y_{n + 1})]\right|
    \leq
    \frac{n B}{n + 1} \sqrt{\frac{1}{2n} \log\left(\frac{2}{\delta} \right)} + \frac{2B}{n + 1}
    =
    \frac{B}{n + 1} \left( \sqrt{\frac{n}{2} \log\left(\frac{2}{\delta} \right)} + 2 \right)
    ,\quad w.p. \geq 1- \delta.
    $$
    Therefore, plugging back into the original equation, we know that with probability no less that $1 - \delta$,
    $$
    \left| \hat \alpha_\ell(\lambda) - \mathbb{E}[\ell_\lambda(X_{n + 1}, Y_{n + 1})] \right| \leq \frac{B}{n + 1} \left( \sqrt{\frac{n}{2} \log\left(\frac{2}{\delta} \right)} + 2 \right) + \frac{1}{n + 1} = \frac{B}{n + 1} \left( \sqrt{\frac{n}{2} \log\left(\frac{2}{\delta} \right)} + 2 + \frac{1}{B} \right).
    $$
    We've concluded the proof.
\end{proof}

Using \Cref{prop:bound}, we can further establish a reliable lower bound on the difference in the true expected loss function between two robustness parameters, as summarized below.

\begin{corollary}
    \label{cor:diff-bound}
    Given $\lambda_1, \lambda_2 \in \Lambda$ and $\lambda_1 \neq \lambda_2$, then with probability no less than $1 - \delta$, there is
    $$
    \left| \mathbb{E}[\ell_{\lambda_1}(X_{n + 1}, Y_{n + 1})] - \mathbb{E}[\ell_{\lambda_2}(X_{n + 1}, Y_{n + 1})] \right|
    \geq
    \left| \hat \alpha_\ell(\lambda_1) - \hat \alpha_\ell(\lambda_2) \right| - \frac{2B}{n + 1} \left( \sqrt{\frac{n}{2} \log\left(\frac{1}{\delta} \right)} + 2 + \frac{1}{B} \right).
    $$
\end{corollary}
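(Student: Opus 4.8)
The plan is to lower-bound the gap between the two true risks by a reverse triangle inequality and then control the resulting estimation error with a single concentration inequality. Writing $\mu_j \coloneqq \mathbb{E}[\ell_{\lambda_j}(X_{n+1},Y_{n+1})]$ and $\hat\alpha_j \coloneqq \hat\alpha_\ell(\lambda_j)$ for $j\in\{1,2\}$, the reverse triangle inequality gives
\[
|\mu_1-\mu_2| \;\ge\; |\hat\alpha_1-\hat\alpha_2| \;-\; \bigl|(\hat\alpha_1-\mu_1)-(\hat\alpha_2-\mu_2)\bigr|,
\]
so the task reduces to an upper bound on the \emph{paired} deviation $(\hat\alpha_1-\mu_1)-(\hat\alpha_2-\mu_2)$ holding with probability at least $1-\delta$.

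First I would strip away the deterministic pieces. Using $|\hat\alpha_j-\tilde\alpha_j|\le 1/(n+1)$ from the ceiling relaxation relating \eqref{eq:alpha-hat} and \eqref{eq:alphat-hat-sim}, together with the explicit form $\tilde\alpha_j=\tfrac{n}{n+1}\bar\ell_n(\lambda_j)+\tfrac{B}{n+1}$, the common offset $B/(n+1)$ cancels in the difference, leaving
\[
\hat\alpha_1-\hat\alpha_2 \;=\; \tfrac{n}{n+1}\bigl(\bar\ell_n(\lambda_1)-\bar\ell_n(\lambda_2)\bigr) \;+\; r,\qquad |r|\le \tfrac{2}{n+1}.
\]
Subtracting $\mu_1-\mu_2=\mathbb{E}[W]$ and adding and subtracting $\tfrac{n}{n+1}\mathbb{E}[W]$, the paired deviation becomes $\tfrac{n}{n+1}\bigl(\bar W-\mathbb{E}[W]\bigr)$ plus deterministic terms of order $B/(n+1)$, where $W_i\coloneqq \ell_{\lambda_1}(X_i,Y_i)-\ell_{\lambda_2}(X_i,Y_i)$ is a single difference variable. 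This is the crucial reduction: the two-parameter error collapses onto the centered empirical mean of one bounded random variable $W_i\in[-B,B]$.

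The concentration step is then a single application of Hoeffding's inequality to $\bar W-\mathbb{E}[W]$ --- not a union bound over two separate events for $\lambda_1$ and $\lambda_2$. Because $W_i$ lies in an interval of width at most $2B$, the stochastic term equals $\tfrac{n}{n+1}$ times the Hoeffding radius, which after attaching the $\tfrac{n}{n+1}$ prefactor reproduces exactly the leading $\tfrac{2B}{n+1}\sqrt{\tfrac n2 \log(1/\delta)}$ factor. Collecting $r$ and the residual $O(B/(n+1))$ offset into additive constants, which the stated $2+\tfrac{1}{B}$ comfortably upper-bounds, then yields the claimed inequality.

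The main obstacle is obtaining the sharp $\log(1/\delta)$ rather than the $\log(2/\delta)$ of a two-sided tail, or the $\log(4/\delta)$ that a naive union bound over the two parameters would incur. Applying Hoeffding once to $W$ already removes the union-bound factor; to further replace the two-sided tail by a one-sided one, I would exploit the monotonicity of $\ell_\lambda$ in $\lambda$ assumed in the setup --- pointwise for miscoverage by nestedness of $\mathcal{C}_\lambda$, and, under \Cref{ass:majorant}, in expectation for regret. Taking $\lambda_1<\lambda_2$ without loss of generality fixes the sign of $\mathbb{E}[W]$, so a single lower-tail bound suffices, which is precisely what converts $\log(2/\delta)$ into $\log(1/\delta)$. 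The delicate point I would watch is that, for regret, $\bar W$ need not be sign-definite pointwise, so I must verify that the slack on the unused tail genuinely makes the one-sided control sufficient; this is where I expect to spend the most care.
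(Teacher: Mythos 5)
Your decomposition is genuinely different from the paper's, and its stochastic core is sharper. The paper's proof simply invokes \Cref{prop:bound} twice---once for each $\lambda_j$ at level $\delta'=\delta/2$---and combines the two events by a union bound and the triangle inequality. You instead collapse the paired deviation onto the centered empirical mean of the single bounded variable $W_i \coloneqq \ell_{\lambda_1}(X_i,Y_i)-\ell_{\lambda_2}(X_i,Y_i)\in[-B,B]$ and apply Hoeffding once; your bookkeeping is right (the $B/(n+1)$ offsets cancel in the difference, and the deterministic residue $\le B/(n+1)+2/(n+1)$ is absorbed by the $2+\tfrac1B$ term). This removes the union-bound factor: a two-sided Hoeffding bound on $\bar W$ gives $\log(2/\delta)$, whereas the paper's own argument, carried out correctly, yields $\log(2/\delta')=\log(4/\delta)$. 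Note that the $\log(1/\delta)$ appearing in the paper's proof and statement is an algebra slip in the substitution $\delta'=\delta/2$, not something the paper actually establishes; so even without your final refinement, your route already proves a strictly sharper bound than the paper's proof does.

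The genuine gap is the last step, exactly where you anticipated it. After the reverse triangle inequality, which tail of $\bar W-\mathbb{E}[W]$ must be controlled is dictated by the \emph{random} sign of $\hat\alpha_\ell(\lambda_1)-\hat\alpha_\ell(\lambda_2)$, not by the sign of $\mathbb{E}[W]$: when $\hat\alpha_\ell(\lambda_1)\ge\hat\alpha_\ell(\lambda_2)$ you need the upper tail, and when $\hat\alpha_\ell(\lambda_1)<\hat\alpha_\ell(\lambda_2)$ you need the lower tail (a strongly negative $\bar W$ with $\mathbb{E}[W]\approx 0$ inflates $|\hat\alpha_\ell(\lambda_1)-\hat\alpha_\ell(\lambda_2)|$, and the constraint $\mathbb{E}[W]\ge 0$ gives no slack there). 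Knowing the sign of $\mathbb{E}[W]$ via \Cref{ass:majorant} kills one of the two failure events only when $W_i$ is pointwise sign-definite, which holds for miscoverage (by nestedness of $\mathcal{C}_\lambda$) but not for regret, where monotonicity is only in expectation. The two failure events are disjoint but their probabilities still add, so this route cannot get below $\log(2/\delta)$ for general losses. The honest conclusion of your proof is therefore the corollary with $\log(2/\delta)$ in general, and $\log(1/\delta)$ only for pointwise-monotone losses---still stronger than what the paper's proof justifies, but not the constant as stated.
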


\begin{proof}[Proof of \Cref{cor:diff-bound}]
    Since for any $\lambda \in \Lambda$, we know from \Cref{prop:bound} that with probability $1 - \delta'$, there is
    \begin{align*}
        \left| \mathbb{E}[\ell_{\lambda}(X_{n + 1}, Y_{n + 1})] - \hat \alpha_\ell(\lambda)  \right|
        \leq \frac{B}{n + 1} \left( \sqrt{\frac{n}{2} \log\left(\frac{2}{\delta'} \right)} + 2 + \frac{1}{B} \right).
    \end{align*}
    Therefore, by setting $\delta' = \delta / 2$, we know that with probability at least $1 - \delta$, there is
    \begin{align*}
        & \left| \mathbb{E}[\ell_{\lambda_1}(X_{n + 1}, Y_{n + 1})] - \mathbb{E}[\ell_{\lambda_2}(X_{n + 1}, Y_{n + 1})]
        - 
        \left(
        \hat \alpha_\ell(\lambda_1) - \hat \alpha_\ell(\lambda_2)
        \right)
        \right| \\
        \leq & 
        \left| \mathbb{E}[\ell_{\lambda_1}(X_{n + 1}, Y_{n + 1})] - \hat \alpha_\ell(\lambda_1)  \right| + \left| \mathbb{E}[\ell_{\lambda_2}(X_{n + 1}, Y_{n + 1})] - \hat \alpha_\ell(\lambda_2)  \right| \\
        \leq & \frac{2B}{n + 1} \left( \sqrt{\frac{n}{2} \log\left(\frac{1}{\delta} \right)} + 2 + \frac{1}{B} \right).
    \end{align*}
    Rearranging, we get:
    $$
    \left| \mathbb{E}[\ell_{\lambda_1}(X_{n + 1}, Y_{n + 1})] - \mathbb{E}[\ell_{\lambda_2}(X_{n + 1}, Y_{n + 1})] \right|
    \geq
    \left| \hat \alpha_\ell(\lambda_1) - \hat \alpha_\ell(\lambda_2) \right| - \frac{2B}{n + 1} \left( \sqrt{\frac{n}{2} \log\left(\frac{1}{\delta} \right)} + 2 + \frac{1}{B} \right).
    $$
    This finishes the proof.
\end{proof}

The difference in the true expected loss function between two robustness parameters is particularly useful in practice, as it reflects how much gain or loss a decision-maker can expect when switching from $\lambda_1$ to $\lambda_2$, as illustrated in \Cref{fig:demo}.
\Cref{cor:diff-bound} suggests that we can simply use the difference in estimators after offsetting with the error term $2 \epsilon$ to obtain a lower bound, which informs the decision maker of a conservative estimate of the change magnitude. Practically, this can help reliably assess the trade-offs involved in selecting different robustness levels, leading to more principled and robust decision-making.

\section{Discussion of \Cref{ass:majorant}}
\label{app:majorant}

To better understand the scope and limitations of our assumptions, we provide two contrasting examples. 
The first demonstrates a setting in which all conditions are satisfied, showing that the proposition applies naturally in classical robust optimization problems. 
The second example highlights a case where the key \emph{majorant consistency} assumption fails, which happens when actuation penalties are asymmetric or the predictive model is misspecified, illustrating the boundaries of our results and where caution is required in practice.
Without loss of generality, we assume $X$ to be degenerate in both examples, so that all arguments are marginalized with respect to $Y$.

\paragraph{Example (assumption holds)}

Consider a linear optimization with a compact convex feasible region
$$
\min_z ~ \langle Y ,z\rangle \quad \text{s.t.} \quad \| z \|_2 \leq M,
$$
and assume $\|Y\|\le K$ a.s. (hence regret is uniformly bounded) and that measurable selectors are taken.
We define nested uncertainty sets $\mathcal{U}_\lambda=\{y:\|y - \mu \|_2\le \lambda\}$, which by definition guarantees that the miscoverage is a decreasing function of $\lambda$.
We define the robust variant of the optimization problem as
$$
\min_z \max_{y \in \mathcal{U}_\lambda} ~ \langle y,z\rangle \quad \text{s.t.} \quad \| z \|_2 \leq M.
$$
Note that the inner maximization simplifies to the support function of the ball, \ie, $\langle \mu ,z \rangle + \lambda \|z\|_2$, so plugging this result in and solving for the outer minimization problem gets
\[
z_\lambda^*=
\begin{cases}
0, & \lambda \ge \|\mu\|_2, \\[2pt]
-M \cdot {\mu} / {\|\mu\|_2}, & \lambda < \|\mu\|_2.
\end{cases}
\]
Therefore, 
$
\mathbb E[f(Y,z_\lambda^*) ]=\langle \mu ,z_\lambda^* 
\rangle\in\{0,-M \|\mu\|_2\}
$
is weakly increasing as $\lambda$ increasing, satisfying \Cref{ass:majorant}.
Nesting holds by construction, and
$$
R_\lambda(X,Y)=\langle Y,z_\lambda^*\rangle-\min_{\|z\|\le M}\langle Y,z\rangle\le 2M\|Y\|\le 2MK,
$$
so $R_\lambda(X, Y)\le B$ by setting $B=2MK$, and it is easy to verify that $R_\lambda(X, Y)\le B$ is an increasing function of $\lambda$, therefore the statements in \Cref{prop:pareto} hold. In fact, the frontier in this example is \textit{strictly decreasing} whenever $\mathbb P\{ \|\mu\|_2 > \lambda \} >0$ for some $\lambda$. This example illustrates that the assumption is mild and naturally satisfied in common robust linear programs with ellipsoidal uncertainty and bounded outcomes.

\begin{remark}
    Note that this example naturally extends to the conditional setting, where $\mu$ is replaced by some estimator $\hat g(x)$ based on the covariate random variable $X$, and $\mathbb E[f(Y,z_\lambda^*) ]$ is replaced by $\mathbb E[f(Y,z_\lambda^*) \mid X]$. All the arguments above remain true. 
\end{remark}

\paragraph{Counterexample (assumption fails)}

Let $Y\in\{-1,+1\}$ with $\mathbb P\{Y=+1\}=0.9$, and decision variable space $\mathcal Z=[-1,1]$.
Define the objective function as
$$
f(y,z) = 
\begin{cases}
    (z - y)^2 & \text{if } z \in [0, 1], \\
    (z - y)^2 + \frac{1}{4} & \text{if } z \in [-1, 0).
\end{cases}
$$
Define $\Lambda = \{ \lambda_1, \lambda_2\}$ with $\lambda_1 < \lambda_2$, and nested uncertainty sets $\mathcal{U}_{\lambda_1}=\{-1\}$ and $\mathcal{U}_{\lambda_2}=\{-1, +1\}$. It can be seen that the (robust) solutions to $\min_z \max_{y \in \mathcal{U}_\lambda} f(y, z)$ can be divided into two cases:
($i$) When $\lambda = \lambda_1$, $z^*_{\lambda_1}=-1$;
($ii$) When $\lambda = \lambda_2$, $z^*_{\lambda_2}=0$.
Therefore, it can be seen that
$$
\mathbb E [f(Y,z^*_{\lambda_1})]=0.9\cdot(4 + 0.25) + 0.1 \cdot 0.25 = \mathbf{3.85} > \mathbb E[f(Y,z^*_{\lambda_2})]=0.9\cdot 1+0.1\cdot 1
= \mathbf{1},
$$
which makes this optimization problem violate \Cref{ass:majorant}.

In the meantime, in this counterexample, while miscoverage is still a decreasing function of $\lambda$ (from $0.9$ under $\lambda_1$ to $0$ under $\lambda_2$), the expected regret is no longer an increasing function of $\lambda$.
Specifically, since $\min_{z\in[-1,1]} f(+1,z)=0$ (achieved at $z=1$) and $\min_{z\in[-1,1]} f(-1,z)=0.25$ (achieved at $z=-1$), there is
\begin{align*}
    \alpha_R(\lambda_1)=&~\mathbb E\left[(4+0.25-0)~\mathbbm{1}\{Y=+1\}+(0.25-0.25)~\mathbbm{1}\{Y=-1\}\right]=3.6+0.9 \cdot 0.25,
    \\
    \alpha_R(\lambda_2)=&~\mathbb E\left[(1-0)~\mathbbm{1}\{Y=+1\}+(1- 0.25)~\mathbbm{1}\{Y=-1\}\right]=0.9+0.1 \cdot (1-0.25),
\end{align*}
and hence $\alpha_R(\lambda_1) > \alpha_R(\lambda_2)$, making expected regret an decreasing function of $\lambda$.

In sum, in this example, all other conditions (nesting, measurability, bounded regret) hold, but the proposition fails without majorant consistency. This illustrates that asymmetric model/actuation costs or misspecified robust surrogates can break the decreasing trade-off.

\section{Proof of Certified Pareto Frontier}
\label{append:proof-pareto}

\begin{proof}[Proof of Proposition~\ref{prop:pareto}]
By nesting, for any $(X,Y)$ and $\lambda_1\le\lambda_2$,
$\mathcal{U}_{\lambda_1}(X)\subseteq \mathcal{U}_{\lambda_2}(X)$, hence
\[
\mathbbm{1}\{Y\notin \mathcal{U}_{\lambda_1}(X)\}
\ge
\mathbbm{1}\{Y\notin \mathcal{U}_{\lambda_2}(X)\}.
\]
Taking expectations on both sides yields $\alpha_I(\lambda_1) \ge \alpha_I(\lambda_2)$.

Observe that
\[
\alpha_R(\lambda)
= \mathbb{E}\left[f\left(Y,z^*_\lambda(X)\right)\right]
  - \mathbb{E}\left[\min_{z\in\mathcal Z} f(Y,z)\right].
\]
The second term is independent of $\lambda$. By the law of iterated expectations and
Assumption~\ref{ass:majorant},
\[
\mathbb{E}\left[f\left(Y,z^*_{\lambda_2}(X)\right)\right]
= \mathbb{E}\left[\mathbb{E}\left[f\left(Y,z^*_{\lambda_2}(X)\right)\middle|X\right]\right]
\ge
\mathbb{E}\left[\mathbb{E}\left[f\left(Y,z^*_{\lambda_1}(X)\right)\middle|X\right]\right]
= \mathbb{E}\left[f\left(Y,z^*_{\lambda_1}(X)\right)\right].
\]
Therefore $\alpha_R(\lambda_1) \le \alpha_R(\lambda_2)$.

Because as $\lambda$ increases (\ie, $\mathcal{U}_\lambda$ expands), $\alpha_I$ weakly decreases while $\alpha_R$ weakly increases, the map
$\lambda\mapsto(\alpha_I(\lambda),\alpha_R(\lambda))$ is (weakly) decreasing.
Fix $\lambda\in\Lambda$. For any $\lambda'\neq\lambda$, if $\lambda'<\lambda$ then
$\alpha_I(\lambda')\ge \alpha_I(\lambda)$ and $\alpha_R(\lambda')\le \alpha_R(\lambda)$;
if $\lambda'>\lambda$ the inequalities reverse. Hence no policy in the family
$\{z^*_\lambda\}$ strictly improves both coordinates over $z^*_\lambda$.
Therefore every image point is Pareto efficient, and the image is exactly the Pareto
frontier achievable by this policy family. (Ties may occur if the inequalities become equalities for some $\lambda_1<\lambda_2$.)
\end{proof}

\begin{proof}[Proof of Corollary~\ref{prop:robust-pareto}]
For a given $\lambda$, the coordinatewise inequalities imply
$(\alpha_I(\lambda),\alpha_R(\lambda))\preceq (\hat\alpha_I(\lambda),\hat\alpha_R(\lambda))$
in the product order on $\mathbb{R}^2$. Hence $\widehat{\mathcal{S}}$ is an outer (safe) approximation of the true risk set
$\mathcal{S}\coloneqq\{(\alpha_I(\lambda),\alpha_R(\lambda)):\lambda\in\Lambda\}$.
Taking the set of non-dominated points (lower-left envelope) of $\widehat{\mathcal{S}}$ yields certified trade-offs:
any selected $\lambda$ on this envelope admits guarantees
$\mathbb{E}[I_\lambda(X,Y)]\le \hat\alpha_I(\lambda)$ and
$\mathbb{E}[R_\lambda(X,Y)]\le \hat\alpha_R(\lambda)$ (by the direction of the bounds), \ie, a certified Pareto frontier.
\end{proof}

\section{Proof of \Cref{cor:post-hoc}}
\label{app:post-hoc}

\begin{proof}[Proof of \Cref{cor:post-hoc}]
    By how $\hat \lambda$ is constructed, it is a random variable measurable with respect to the sigma field generated by $\left\{ (X_i, Y_i)\right\}_{i=1}^n$. Therefore,
    $\left\{ \ell_{\hat \lambda}(X_i, Y_i) \right\}_{i=1}^{n+1} \mid \hat \lambda$ is not an exchangeable sequence, which breaks \eqref{eq:chain-eq}.
    By using the definition of $\tilde \alpha_\ell(\lambda)$ in \eqref{eq:alphat-hat-sim}, there is:
    \begin{align*}
        & \mathbb{E}\left[ \tilde \alpha_\ell(\hat \lambda) \right] - \mathbb{E}[\ell_{\hat \lambda}(X_{n + 1}, Y_{n + 1})] \\
        = &~ \frac{1}{n+1} \sum_{i=1}^n \mathbb{E}[\ell_{\hat \lambda}(X_i, Y_i)] + \frac{B}{n+1} - \mathbb{E}[\ell_{\hat \lambda}(X_{n + 1}, Y_{n + 1})] \\
        = &~ \frac{1}{n+1} \sum_{i=1}^n \mathbb{E}[\ell_{\hat \lambda}(X_i, Y_i)] + \frac{B}{n+1} - \frac{1}{n + 1}\sum_{i=1}^{n + 1}\mathbb{E}[\ell_{\hat \lambda}(X_{n + 1}, Y_{n + 1})] \\
        = &~ \frac{B - \mathbb{E}[\ell_{\hat \lambda}(X_{n + 1}, Y_{n + 1})]}{n + 1} + \frac{1}{n+1} \sum_{i = 1}^n\left\{ \mathbb{E}[\ell_{\hat \lambda}(X_i, Y_i)] - \mathbb{E}[\ell_{\hat \lambda}(X_{n + 1}, Y_{n + 1})] \right\} \\
        \geq &~ - \frac{1}{n+1} \sum_{i = 1}^n\left| \mathbb{E}[\ell_{\hat \lambda}(X_i, Y_i)] - \mathbb{E}[\ell_{\hat \lambda}(X_{n + 1}, Y_{n + 1})] \right| \\
        \geq &~ - \frac{n}{n+1} \sup_{1 \leq i \leq n} \left| \mathbb{E}[\ell_{\hat \lambda}(X_i, Y_i)] - \mathbb{E}[\ell_{\hat \lambda}(X_{n + 1}, Y_{n + 1})] \right| \\
        \geq &~ - \sup_{1 \leq i \leq n} \left| \mathbb{E}[\ell_{\hat \lambda}(X_i, Y_i)] - \mathbb{E}[\ell_{\hat \lambda}(X_{n + 1}, Y_{n + 1})] \right|.
    \end{align*}
    The first inequality follows from \Cref{ass:bound}. Therefore, using the chain equality \eqref{eq:chain}, we conclude that
    $$
    \mathbb{E} \left[ \hat \alpha_\ell(\lambda) \right] \geq \mathbb{E}[\ell_{\hat \lambda}(X_{n + 1}, Y_{n + 1})] - \sup_{1 \leq i \leq n} \left| \mathbb{E}[\ell_{\hat \lambda}(X_i, Y_i)] - \mathbb{E}[\ell_{\hat \lambda}(X_{n + 1}, Y_{n + 1})] \right|.
    $$
    This finishes the proof.
\end{proof}

\begin{remark}
    The splitting procedure (\Cref{alg}) can retain the exact validity guarantee as in \Cref{thm:p-val} because conditional exchangeability is preserved, and therefore \eqref{eq:chain-eq} holds. Specifically, given two random splits from the calibration data, denoted by the index sets $\mathcal{I}_1$ and $\mathcal{I}_2$, then by construction of $\hat \alpha_\ell^{(1)}(\lambda)$ and $\hat{\mathcal{F}}^{(1)}$, $\hat \lambda$ is a random variable that is measurable with respect to the sigma field generated by $\left\{ (X_i, Y_i)\right\}_{i \in \mathcal{I}_1}$. Therefore, there is
    $$
    \left\{ \ell_{\hat \lambda}(X_i, Y_i) \right\}_{i \in \mathcal{I}_2} \mid \left\{ (X_i, Y_i)\right\}_{i \in \mathcal{I}_1} \quad \text{is exchangeable}
    \Longrightarrow
    \left\{ \ell_{\hat \lambda}(X_i, Y_i) \right\}_{i \in \mathcal{I}_2} \mid \hat \lambda \quad \text{is exchangeable}.
    $$
    This implies the following equality chain 
    $$
    \mathbb{E}\left[ \ell_{\hat \lambda}(X_{i_1}, Y_{i_1})\mid \hat \lambda\right]
    = \ldots
    = \mathbb{E}\left[ \ell_{\hat \lambda}(X_{i_{|\mathcal{I}_2|}}, Y_{i_{|\mathcal{I}_2|}}) \mid \hat \lambda \right],    
    $$
    Therefore, conditioning on any $\hat \lambda$, there is
    \begin{align*}
        & \mathbb{E} [ \tilde \alpha_\ell(\hat \lambda) \mid \hat \lambda ] - \mathbb{E}[\ell_{\hat \lambda}(X_{n + 1}, Y_{n + 1}) \mid \hat \lambda] = \frac{1}{|\mathcal{I}_2| + 1} \sum_{i \in \mathcal{I}_2} \mathbb{E}[\ell_{{\hat \lambda}}(X_i, Y_i) \mid \hat \lambda] + \frac{B}{|\mathcal{I}_2|+1} - \mathbb{E}[\ell_{\hat \lambda}(X_{n + 1}, Y_{n + 1}) \mid \hat \lambda] \\
        = &~ \frac{|\mathcal{I}_2|}{|\mathcal{I}_2|+1} \mathbb{E}[\ell_{\hat \lambda}(X_{n+1}, Y_{n+1}) \mid \hat \lambda] + \frac{B}{|\mathcal{I}_2|+1} - \mathbb{E}[\ell_{\hat \lambda}(X_{n + 1}, Y_{n + 1}) \mid \hat \lambda] =
        \frac{B - \mathbb{E}[\ell_{\hat \lambda}(X_{n + 1}, Y_{n + 1}) \mid \hat \lambda]}{|\mathcal{I}_2|+1} \geq 0.
    \end{align*}
    By the tower property of expectation, there is
    $$
    \mathbb{E}[ \tilde \alpha_\ell(\hat \lambda) ] \geq \mathbb{E}[\ell_{\hat \lambda}(X_{n + 1}, Y_{n + 1}) ].
    $$
    The remaining argument follows the proof of \Cref{thm:p-val} to show an exact validity guarantee for $\hat \alpha_\ell(\hat \lambda)$.
\end{remark}

\section{Extension: Inaccurate Solver With Known Suboptimality Gap}
\label{app:ext}
The current validity assumes access to exact optimization solutions. However, when only approximate solutions are available, validity can still be guaranteed by incorporating the solver gap into the empirical loss. Denote the suboptimality gap induced by some optimizer solver $\mathcal{A}: \mathcal{Y} \to \mathcal{Z}$ as:
$$
G(Y, \mathcal{A}) = f(Y, \mathcal{A}(Y)) - \min_{z \in \mathcal{Z}} f(Y, z).
$$
Define $\hat \ell_\lambda(x_i, y_i)$ as the solved regret from the solver. We define the solved average regret inflated by the suboptimality gap as:
$$
\bar \ell_n^{(G)} = \frac{1}{n} \sum_{i=1}^n \left[ \hat \ell_\lambda(x_i, y_i) + G(y_i, \mathcal{A})\right],
$$
which is essentially a sample average estimator offset by the optimality gap, we defined the new risk estimator as:
\begin{equation}
    \label{eq:new-est}
    \bar \alpha_\ell^{(G)}(\lambda) = \frac{n}{n + 1} \bar \ell_n^{(G)} + \frac{B}{n + 1}.
\end{equation}
It is easy to see that since $\bar \ell_n^{(G)} \geq \bar \ell_n$, and by applying \Cref{thm:p-val} on $\bar \alpha_\ell$, we deduce that:
$$
\mathbb{E}[\bar \alpha_\ell^{(G)}(\lambda)] \geq \mathbb{E}\left[\bar \alpha_\ell(\lambda)\right] \geq \mathbb{E}\left[ \ell_\lambda(X_{n+1}, Y_{n+1})\right],
$$
which completes the proof for the validity of the new risk estimator defined in \eqref{eq:new-est}.

\section{Additional Experiment Details}
\label{app:exp}

We focus on four representative optimization paradigms for decision-making problems:  
($i$) linear programming;  
($ii$) the newsvendor problem (single-item inventory);  
($iii$) portfolio optimization (long-only, mean-CVaR proxy); and  
($iv$) the shortest path problem (equivalently, a unit flow problem).

The synthetic data used in our experiments focuses on settings where $X$ is degenerated, so we only have observations of $Y$.
As we've mentioned before, this simplification does not limit the generality of our experimental findings, since both the Pareto frontier and its associated performance metrics are inherently unconditioned on $X$.

Our code is written in Python, with core dependencies on the \texttt{cvxpy} \cite{diamond2016cvxpy} and \texttt{cvxpylayers} \cite{cvxpylayers2019} libraries, which are used to solve convex optimization problems. We use the default parameter settings in their libraries across all experiments.
Across all settings, the upper bound $B$ is heuristically set to the maximum observed regret for a large number ($1 \times 10^2$) of simulations. $\lambda$ is varied from zero to the support radius of $Y$.

\subsection{Experimental Setups}

In this section, we present the detailed experimental setups for the four optimization problems considered in our study.
Across all settings, we specify the uncertainty set as $\mathcal{U}_\lambda  = \{ \tilde y : \|\tilde y - \mu\|_\infty \le \lambda \}$, where $\mu$ denotes the mean of the distribution of the outcome variable $Y$.
These optimizations are solved via convex solvers in our implementation.

\paragraph{Linear Programming}
We consider a two-dimensional linear programming (LP), defined as
\begin{equation*}
    \min_{z \in \mathbb{R}^2}  Y^\top z
    \quad \text{s.t.} \quad A z \le b,
\end{equation*}
where $Y$ is a bivariate uniform random variable supported on $[-2.1, -0.1] \times [-2, 0]$.
We specify the feasible region as a triacontagon (\textit{a.k.a.} 32-gon) centered at the origin, intersecting the positive orthant. Specifically, the parameters $A \in \mathbb{R}^{34 \times 2}$ and $b \in \mathbb{R}^{34}$ are defined as:
$$
A_i = \left( \cos\left(\frac{2 \pi}{32} \times i \right), \sin\left(\frac{2 \pi}{32} \times i\right) \right), \quad b_i = (1, 1), \quad \forall i = 1, \ldots, 32.
$$
and the last two rows of $A$ and $b$ are configured to ensure the positivity of $z$ on both dimensions.
The robust variant of this problem is defined as
\begin{equation*}
    \min_{z \in \mathbb{R}^2} \max_{y \in \mathcal{U}_\lambda} y^\top z
    \quad \text{s.t.} \quad A z \le b.
\end{equation*}
This robust optimization problem can be equivalently reformulated as another LP
\begin{equation*}
    \min_{z, t \in \mathbb{R}^2}  \mu^\top z + \lambda \mathbf{1}^\top t
    \quad \text{s.t.} \quad 
    A z \le b,
    -t \le z \le t,
    t \ge 0,
\end{equation*}
where $\mathbf{1} = [1, 1]^\top$ is an all-ones vector and $t$ serves as an auxiliary variable to linearize the $\ell_\infty$-norm.

\paragraph{Newsvendor}
We consider a standard single-item inventory (newsvendor) problem, defined as
\begin{equation*}
    \min_{z \in \mathbb{R}_+}  \big[
 -p \min(Y, z) + c z - v (z - Y)^+ \big],
\end{equation*}
where $z$ denotes the stocking quantity $Y$ the random demand, $p$ the unit selling price, $c$ the unit procurement cost, and $v$ the unit salvage value for unsold inventory.
This objective captures the trade-off between the profit from meeting demand and the loss due to overstocking or understocking.
We simulate demands $Y$ as a univariate uniform random variable supported on $[1, 3]$, and the parameters are set to $(p, c, v) = (3, 2, 0)$.
The robust variant of the newsvendor problem is defined as
\begin{equation*}
    \min_{z \in \mathbb{R}_+}  \max_{y \in \mathcal{U}_\lambda} 
    \big[ -p \min(y, z) + c z - v (z - y)^+ \big].
\end{equation*}
Since both the original problem and the robust problem admit closed-form solutions, we can plug the solutions into the regret expression to get:
\begin{equation*}
    R_\lambda(X, Y)
    = p \big(Y - \min\{ Y, \mu - \lambda \}\big)
      - c \big(Y - (\mu - \lambda)\big).
\end{equation*}

\paragraph{Portfolio Selection}
We study a long-only portfolio selection problem with CVaR regularization.
Let $Y \in \mathbb{R}^{2}$ denote the return scenario for two assets, which is specified as a bivariate uniform random variable supported on $[1, 3] \times [1, 3]$.
Let $z \in \mathbb{R}^2$ be the decision variable.
For a given confidence level $\alpha \in (0,1)$ and some risk-aversion parameter $\gamma > 0$, the optimization problem is defined as
\begin{equation*}
    \min_{z \in \mathbb{R}^2} - Y^\top z - \gamma \operatorname{CVaR}_{\alpha}(Y^\top z),
\end{equation*}
where the parameters are specified as $(\gamma, \alpha) = (1, 0.95)$.
This problem can be reformulated into an explicit convex optimization by introducing an auxiliary VaR variable $t \in \mathbb{R}$ and slack variables $u_i \in \mathbb{R}$:
\begin{equation*}
    \min_{z,t,u}
    -\,Y^\top z
    +
    \gamma\left(
        t + \frac{1}{(1-\alpha)\,n}\sum_{i=1}^n u_i
    \right)
    \quad
    \text{s.t.}\quad
    \mathbf{1}^\top z = 1,
    z \ge 0,
    u_i \ge - y_i^\top z - t,
    u_i \ge 0.
\end{equation*}
Here, $y_i$ denotes a separate set of realized set of return scenarios for computing the empirical CVaR.
The robust optimization variant can be reformulated by adding an $\ell_1$ penalty on $z$ and replaces $Y$ by $\mu$ to the original optimization problem:
\begin{equation*}
    \min_{z,t,u}
    -\,\mu^\top z
    +
    \gamma\left(
        t + \frac{1}{(1-\alpha)\,n}\sum_{i=1}^n u_i
    \right)
    +
    \lambda \,\|z\|_1
    \quad
    \text{s.t.}\quad
    \mathbf{1}^\top z = 1,
    z \ge 0,
    u_i \ge - y_i^\top z - t,
    u_i \ge 0.
\end{equation*}

\paragraph{Shortest Path}
The shortest path problem is formulated on a directed acyclic network with multiple parallel source--sink paths of varying lengths, as illustrated in \Cref{fig:net}.
The objective is to find the least-cost flow that satisfies flow conservation while minimizing travel cost under uncertainty.

\begin{figure}
    \centering
    \includegraphics[width=0.5\linewidth]{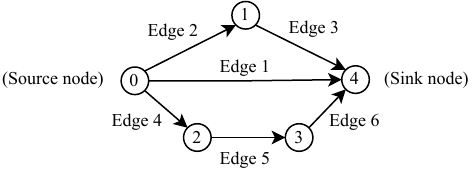}
    \caption{An example illustration of a network with three paths considered in our shortest path optimization.}
    \label{fig:net}
\end{figure}

Let the network be represented by its node--edge incidence matrix $A \in \mathbb{R}^{n \times m}$ and a supply--demand vector for each node $b \in \mathbb{R}^{n}$, where $b_0 = 1$ and $b_n = -1$ denote the source and sink nodes, respectively. The decision variable $z \in \mathbb{R}_+^{m}$ encodes the flow through each edge, and the Gaussian random vector $Y \in \mathbb{R}^{m}$ specifies the cost on each edge. The nominal (stochastic) shortest path problem is then
\begin{equation*}
    \min_{z \in \mathbb{R}_+^{m}}
     Y^\top z
    \quad \text{s.t.} \quad A z = b.
\end{equation*}
Since the shortest-path problem is linear, its robust counterpart---and its equivalent convex reformulation---can be written as
\begin{equation*}
    \min_{z \in \mathbb{R}_+^{m}}
     \max_{y \in \mathcal{U}_\lambda}
     y^\top z
    \quad \text{s.t.} \quad A z = b.
    \quad 
    \Longleftrightarrow
    \quad 
    \min_{z \in \mathbb{R}_+^{n_\text{edge}}}
     \mu^\top z + \lambda\, (w^\top z)
    \quad \text{s.t.} \quad A z = b,
\end{equation*}
where $w \in \mathbb{R}_+^{n_\text{edge}}$ is a vector of per-edge sensitivity coefficients. Namely, robustness inflates the nominal cost by a weighted $\ell_1$ penalty on flow magnitude.

In our implementation, we specify the network to have three paths, with a gradually decreasing mean of the length random variable $Y$.
Please refer to our source code for more detailed specifications.

\subsection{Additional Experiment Results}

\begin{figure}[!t]
    \centering
    \includegraphics[width=1.0\linewidth]{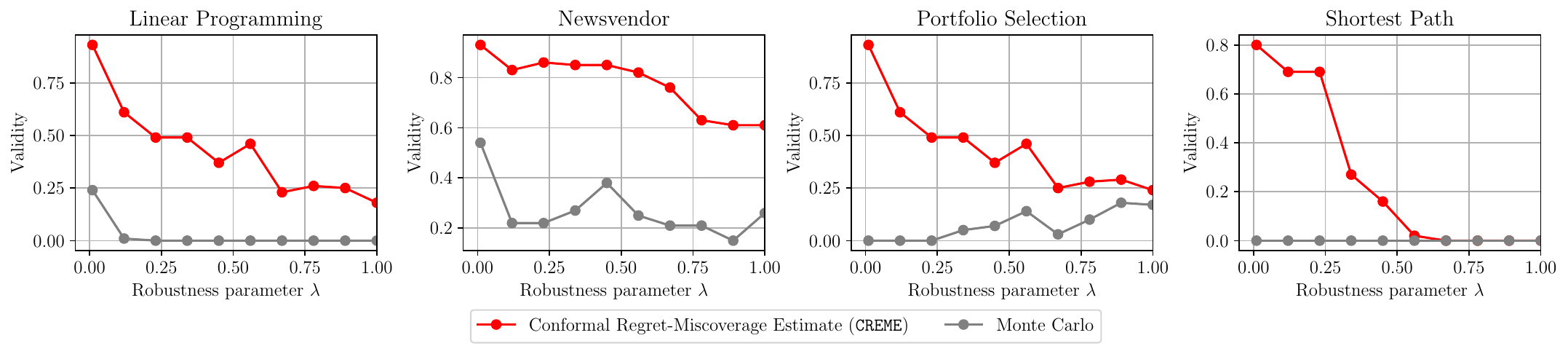}
    \includegraphics[width=1.0\linewidth]{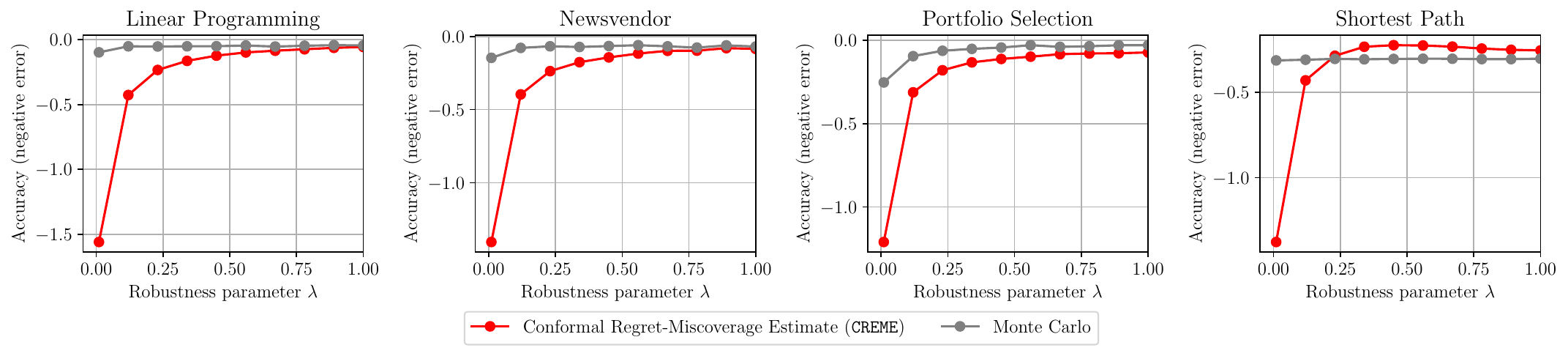}
    \includegraphics[width=1.0\linewidth]{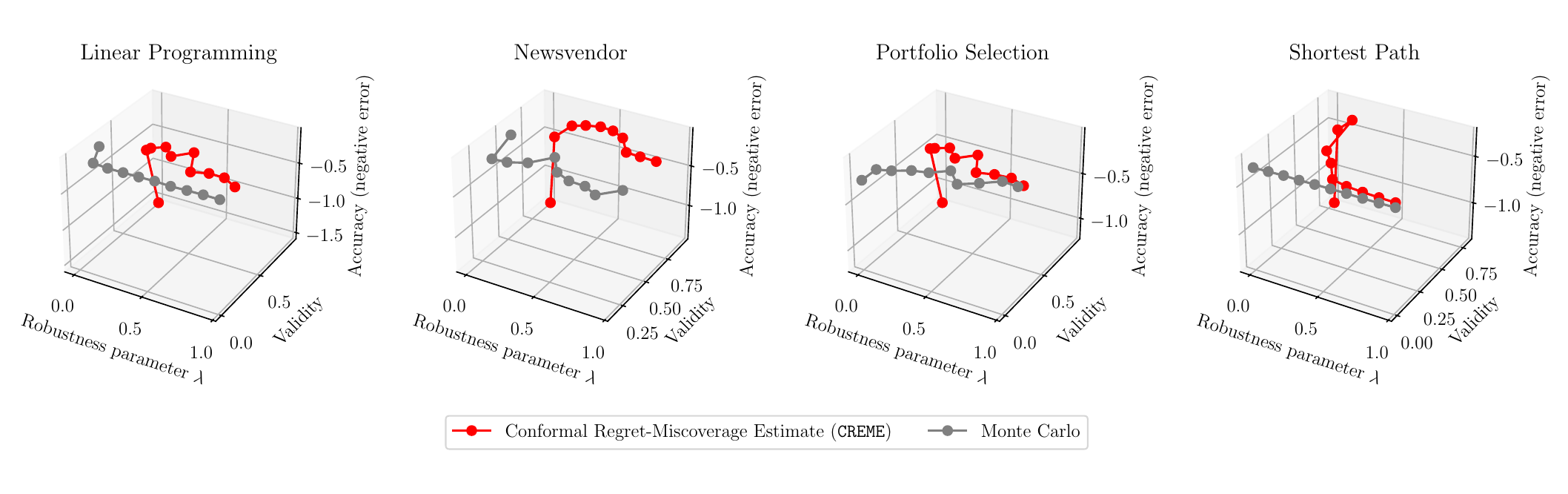}
    \caption{Validity-accuracy tradeoff curves under four optimization settings.
    \textit{First row}: validity versus robustness parameter $\lambda$.
    \textit{Second row}: accuracy versus robustness parameter $\lambda$.
    \textit{Third row}: a 3D view of the two metrics and the robustness parameter $\lambda$.
    }
    \label{fig:exp1-app}
\end{figure}

\begin{figure}[!t]
    \centering
    \includegraphics[width=1.0\linewidth]{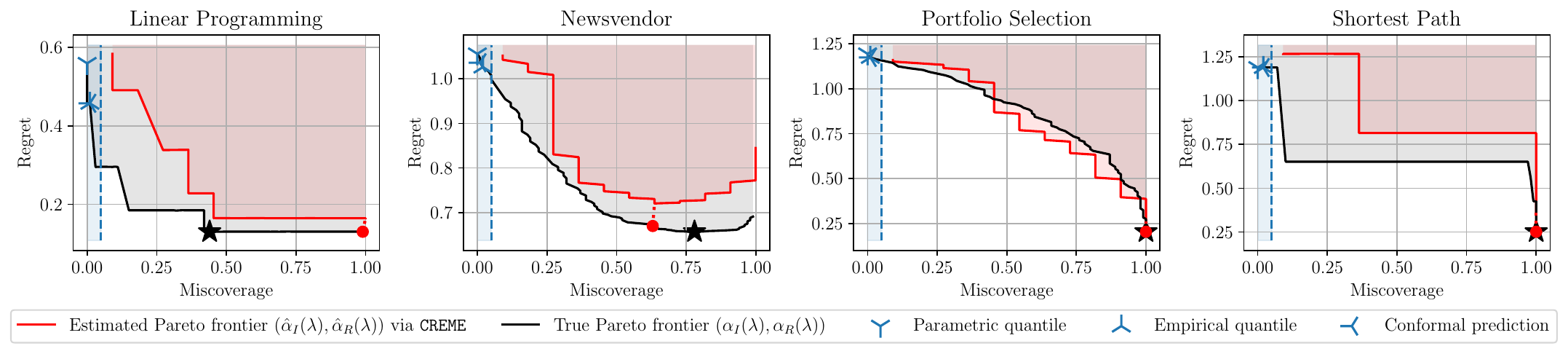}
    \includegraphics[width=1.0\linewidth]{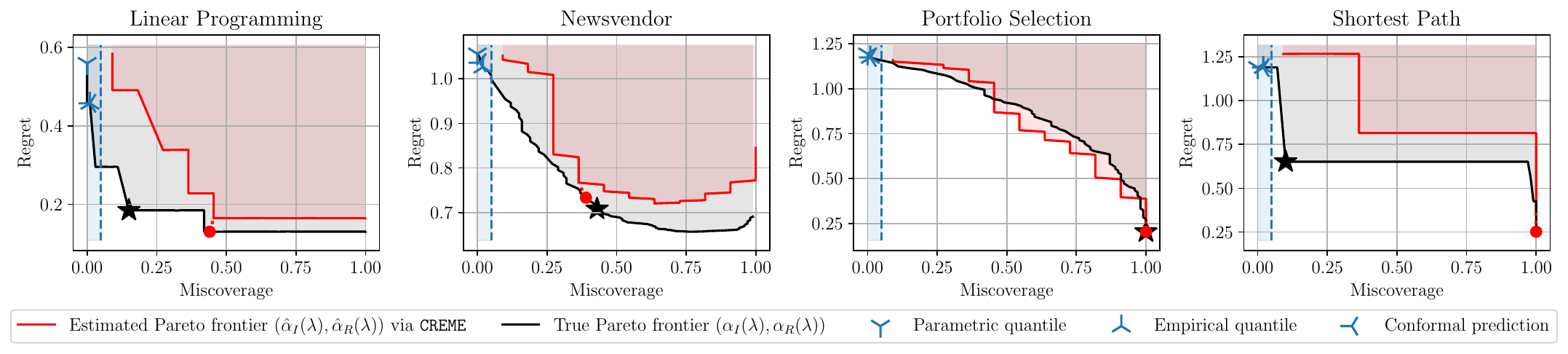}
    \includegraphics[width=1.0\linewidth]{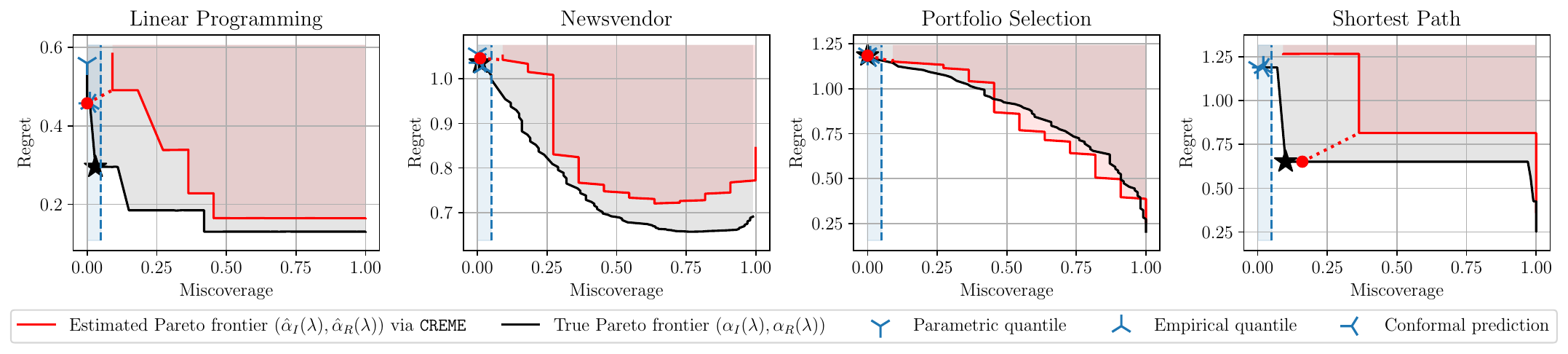}
    \caption{Miscoverage–regret tradeoff Pareto frontiers with attained solutions from each model under different preference weights.
    \textit{First row}: tradeoff points identified using weight $w_1 = 0, w_2 = 1$.
    \textit{Second row}: tradeoff points identified using weight $w_1 = \sqrt{2}/2, w_2 = \sqrt{2}/2$.
    \textit{Third row}: tradeoff points identified using weight $w_1 = 1, w_2 = 0$.
    }
    \label{fig:exp2-app}
\end{figure}

In this section, we provide additional experimental details and results that were omitted from the main paper.

\paragraph{Validity-Accuracy Analysis}
We provide additional mathematical details of the evaluation metrics used in the validity-accuracy analysis experiment.
Let the number of trials be $T = 20$. For each prespecified robustness parameter $\lambda \in \Lambda$, the validity and accuracy metrics are defined as:
$$
\text{Validity} = \frac{1}{T} \sum_{t=1}^T \mathbbm{1}\{ \hat \alpha_R(\lambda) \ge \alpha_R(\lambda)\} \cdot \mathbbm{1}\{ \hat \alpha_I(\lambda) \ge \alpha_I(\lambda)\}.
$$
$$
\text{Accuracy} = - \frac{1}{T} \sum_{t=1}^T \sqrt{ \left( \hat \alpha_R(\lambda) - \alpha_R(\lambda) \right)^2 + \left(  \hat \alpha_I(\lambda) - \alpha_I(\lambda) \right)^2}.
$$
In our implementation, these metrics are further averaged over all $\lambda \in \Lambda$, where $\Lambda$ is chosen such that the corresponding uncertainty sets achieve miscoverage rates ranging from $0\%$ to $100\%$.
This range is determined by the radius of the support of the distribution of the random outcome $Y$.
Finally, the estimators $\hat{\alpha}_\ell(\lambda)$ are computed by averaging over $10$ independent replications to obtain smoother estimates.

\paragraph{Decision Quality Evaluation}
We provide additional details for the setup of the decision quality evaluation experiment.
When constructing the certified Pareto frontier estimated from \texttt{CREME} as well as for the baseline methods in estimating $\hat \lambda$, we use a total of $n = 10$ calibration data. The optimal solution of the tradeoff is the plug-in of the estimated optimal robustness parameter $\hat \lambda$. The estimated tradeoff point from \texttt{CREME} is defined as
$
\hat \lambda = \min_\lambda [ w_1 \cdot \hat \alpha_I(\lambda) + w_2 \cdot \hat \alpha_R(\lambda)].
$
Similarly, the optimal tradeoff point from the true Pareto frontier is defined as
$
\lambda^* = \min_\lambda [ w_1 \cdot  \alpha_I(\lambda) + w_2 \cdot \alpha_R(\lambda)].
$

\Cref{fig:exp1-app} provides a detailed breakdown of the validity–accuracy performance evaluation originally presented in \Cref{fig:exp1} of the main paper.
Consistent with the observations discussed in the main text, it is now more clearly visible that as the robustness parameter $\lambda$ (\ie, the radius of the uncertainty set) increases, the validity of \texttt{CREME} decreases approximately linearly, whereas its accuracy improves at an almost exponential rate.
These results further substantiate our earlier claim regarding the superior performance of \texttt{CREME} over the naive Monte Carlo estimator.

\Cref{fig:exp2-app} parallels \Cref{fig:exp2} in the main paper, but presents additional trade-off points attained by different methods under varying specifications of the preference weightings.
It can be observed that across different weight configurations, the three baseline methods (parametric quantile, empirical quantile, and conformal prediction) perform well only in the last row, which corresponds to a weighting of $w_1 = 1, w_2 = 0$.
This behavior is expected, as this particular weighting places greater emphasis on minimizing miscoverage, which is the primary objective of these baselines.
In contrast, \texttt{CREME} consistently identifies near-optimal solutions that lie close to the true Pareto-optimal trade-off points across all weighting specifications.
These results further support our main claim that \texttt{CREME} serves as a principled and flexible framework for identifying optimal trade-offs between regret and miscoverage under diverse decision-maker preferences.

\paragraph{Sensitivity Analysis of Sample Maximum Estimation}

The experiment result is shown in \Cref{fig:sens-sample}. In general, the ``sample approximated $B$'' model produces an estimated Pareto front that is closer to the real Pareto Frontier compared to ``Use True $B$''. This is because the sample maximum approximation will always be a lower bound to the true $B$, and therefore, the estimated Pareto front will tend to be more radical. 
This can be an advantage, as the gap between the true and estimated Pareto frontier becomes tighter. However, this may also be disadvantageous as it increases the risk of violating validity. For example, in the $n=30$ setting, it can be seen that the ``sample approximated B'' model fails to upper bound the real Pareto frontier at the traced red region. This indicates that the ``sample approximated $B$'' model no longer satisfies \Cref{thm:p-val}. 
As $n$ increases from $10$ to $30$, both models produce tighter and tighter estimates. Such tightness is also one of the core reasons why ``sample approximated $B$'' eventually violated validity at $n=30$.

\begin{figure}
    \centering
    \includegraphics[width=1.0\linewidth]{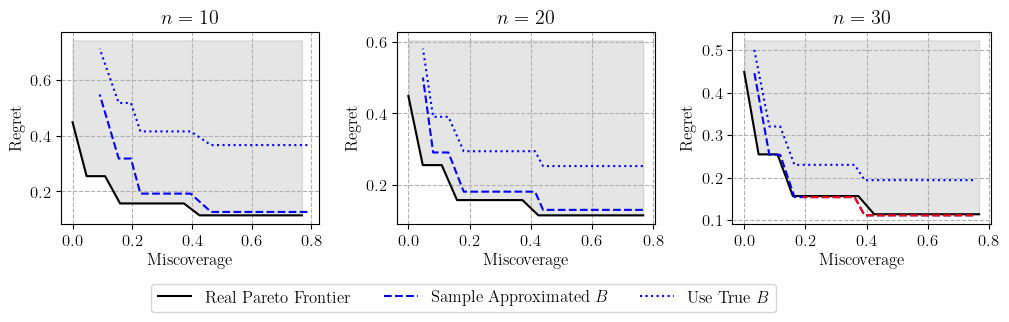}
    \caption{Comparison of CREME's estimated Pareto frontier using ground-truth upper bound $B$ versus the sample maximum approximated $B$. The dashed blue line represents the former, and the dotted blue line represents the latter.  We vary the total number of data samples used from $n=10$ to $n=30$ across three panels. The red dashed line in the right panel indicates failure to satisfy the validity requirement \eqref{eq:val} as discussed in \Cref{thm:p-val}.}
    \label{fig:sens-sample}
\end{figure}

\paragraph{Ablation Study on Conformal Correction's Effect on Selected Robustness Parameter}

The experiment result is shown in \Cref{fig:correction}. It can be seen that, across most preference weights and n specifications, the selected robustness parameter is the same for the w/ Correction method and the w/o Correction method. This indicates that, in terms of selected robustness parameters, CREME does not result in a significant difference compared to standard CV robustness tuning approaches. 
However, a significant difference is observed in terms of the validity of the risk estimate. Across all settings, ``w/ Correction'' reaches a $100\%$ validity while the ``w/o correction'' achieves no more than $50\%$. This indicates that CREME is highly useful for robustness parameter auditing tasks, where credible and reliable risk characterization of the robustness parameter is equally important to its selection.

\begin{figure}
    \centering
    \includegraphics[width=1.0\linewidth]{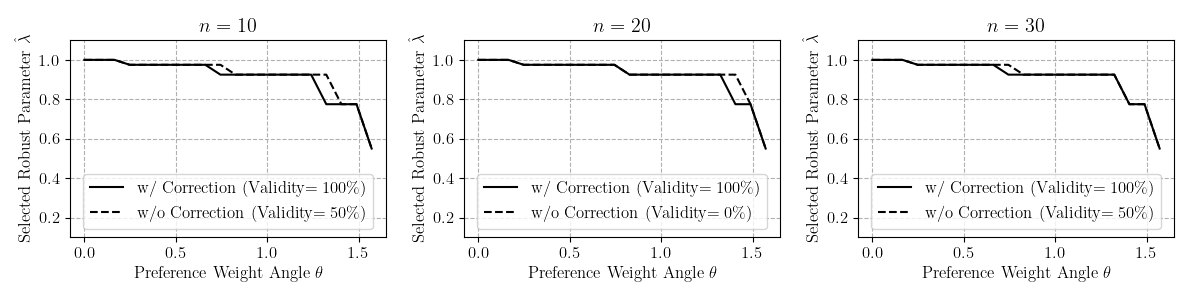}
    \caption{Comparison of the selected robustness parameter between conformal correction (\ie, CREME) and without conformal correction (\eg, standard CV robustness tuning) across different numbers of samples $n$.  Solid black line indicates ``w/ correction'' and dashed black line indicates ``w/o correction''. The decision-maker maximizes a linear preference, where the weight is specified as $(-\sin(\theta), -\cos(\theta))$ with $\theta$ ranging from $0$ to $2 \pi$. Validity is computed by computing the percentage of trials where \eqref{eq:val} in \Cref{thm:p-val} holds empirically. }
    \label{fig:correction}
\end{figure}



\paragraph{Verifying \Cref{prop:bound} (Finite-Sample Error Bound)}
The experiment result is shown in \Cref{fig:bound}.
It can be seen that the empirical MAE (black solid line) and its variations strictly lie below the theoretical upper bound (red dashed line). This serves as empirical evidence supporting that \Cref{prop:bound} is valid.
As $n$ increases, the empirical MAE as well as the theoretical upper bound clearly shrink towards zero. This proves our estimator to be an asymptotically consistent estimator.
We note that Proposition 3.5 is not meant to be a tight bound, as the assumption it relies on is minimal (i.e., a bounded assumption), therefore a non-diminishing gap between the empirical and theoretical rate. However, this gap is expected to close since the theoretical bound would eventually converge to zero.

\begin{figure}
    \centering
    \includegraphics[width=0.4\linewidth]{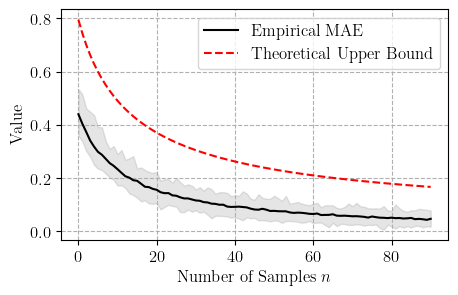}
    \caption{Finite-sample mean absolute error (MAE) of the estimated risk and the theoretical bound derived in \Cref{prop:bound}. The red dashed line denotes the theoretical rate, and the black solid line denotes the empirically computed MAE from $100$ trials. The shaded area represents the minimum and maximum MAE at the corresponding $n$ across all trials.}
    \label{fig:bound}
\end{figure}

\end{document}